\newtheorem{theorem}{Theorem}
\newtheorem{lemma}{Lemma}
\newcommand{\probP}{\text{I\kern-0.15em P}}
\begin{document}

\title{The Lottery Ticket Hypothesis for Self-attention in Convolutional Neural Network$^\clubsuit$}

\author{Zhongzhan Huang, Senwei Liang$^*$, Mingfu Liang, Wei He, Haizhao Yang$^\dag$, and Liang Lin$^\dag$
        
\thanks{$^*$ Zhongzhan Huang and Senwei Liang have equal contributions.}%

\thanks{$^\dag$ Correspondence should be addressed to yang1863@purdue.edu, linliang@ieee.org.}

\thanks{Zhongzhan Huang and Liang Lin are with
 School of Computer Science and Engineering, Sun Yat-sen University, Guangzhou 510275, China (e-mail: huangzhzh23@mail2.sysu.edu.cn; linliang@ieee.org).}

\thanks{Senwei Liang and Haizhao Yang are with the Department of Mathematics, Purdue University, West Lafayette, IN, United States (e-mail: liang339@purdue.edu; yang1863@purdue.edu)}%

\thanks{Mingfu Liang is with the Department of Electrical and Computer Engineering, Northwestern University, Evanston, IL, United States (e-mail: mingfuliang2020@u.northwestern.edu).} \thanks{Wei He is with the School of Computer Science and Engineering, Nanyang Technological University, Singapore (email: wei005@e.ntu.edu.sg).}%
\thanks{$\clubsuit$ Technical report}%
}



\maketitle

\begin{abstract}
Recently many plug-and-play self-attention modules (SAMs) are proposed to enhance the model generalization by exploiting the internal information of deep convolutional neural networks~(CNNs). In general, previous works ignore where to plug in the SAMs since they connect the SAMs individually with each block of the entire CNN backbone for granted, leading to incremental computational cost and the number of parameters with the growth of network depth. However, we empirically find and verify some counterintuitive phenomena that: (a) Connecting the SAMs to all the blocks may not always bring the largest performance boost, and connecting to partial blocks would be even better; (b) Adding the SAMs to a CNN may not always bring a performance boost, and instead it may even harm the performance of the original CNN backbone.

Therefore, we articulate and demonstrate the Lottery Ticket Hypothesis for Self-attention Networks: a full self-attention network contains a subnetwork with sparse self-attention connections that can (1) accelerate inference,~(2) reduce extra parameter increment, and~(3) maintain accuracy. In addition to the empirical evidence, this hypothesis is also supported by our theoretical evidence. Furthermore, we propose a simple yet effective reinforcement-learning-based method to search the ticket, \textit{i.e}., the connection scheme that satisfies the three above-mentioned conditions. Extensive experiments on widely-used benchmark datasets and popular self-attention networks show the effectiveness of our method. Besides, our experiments illustrate that our searched ticket has the capacity of transferring to some vision tasks, \textit{e.g.}, crowd counting and segmentation.
\end{abstract}

\begin{IEEEkeywords}
Self-attention, Lottery Ticket Hypothesis, Reinforcement Learning, Neural Architecture Search.
\end{IEEEkeywords}

\section{Introduction}
\label{sec:1}

\IEEEPARstart{R}{ecently}, various plug-and-play self-attention modules (SAMs) which enhance instance specificity by the interior network information~\cite{liang2020instance} are proposed to boost the generalization of convolutional neural networks~(CNNs)~\cite{hu2018squeeze,woo2018cbam,li2019spatial,huang2020dianet,cao2019gcnet,wang2018non}. The SAM is usually plugged into every block of a CNN, \textit{e.g.}, the residual block of ResNet~\cite{he2016deep}. We display the structure of a ResNet in Fig.~\ref{fig:comparsion} (a) and a full self-attention network (Full-SA) whose each block connects to an individual SAM as in Fig.~\ref{fig:comparsion} (b). As illustrated in Fig.~\ref{fig:comparsion}, the implementation of SAMs incorporates three steps: extraction, processing, and recalibration. These operations and trainable components in SAMs require extra computational cost and parameters, resulting in slow inference and cumbersome network~\cite{bianco2018benchmark}. This limits self-attention usability on industrial applications that need a real-time response or small memory consumption, such as robotics, self-driving car, and mobile device. Therefore, other than only improving the capacity of the SAM, previous works also focus on the light-weight SAM design, \textit{e.g.}, reducing the parameters of an individual SAM~\cite{li2019spatial,lee2019srm}. 
\begin{figure}[t]
\centering
\includegraphics[width=0.94\linewidth]{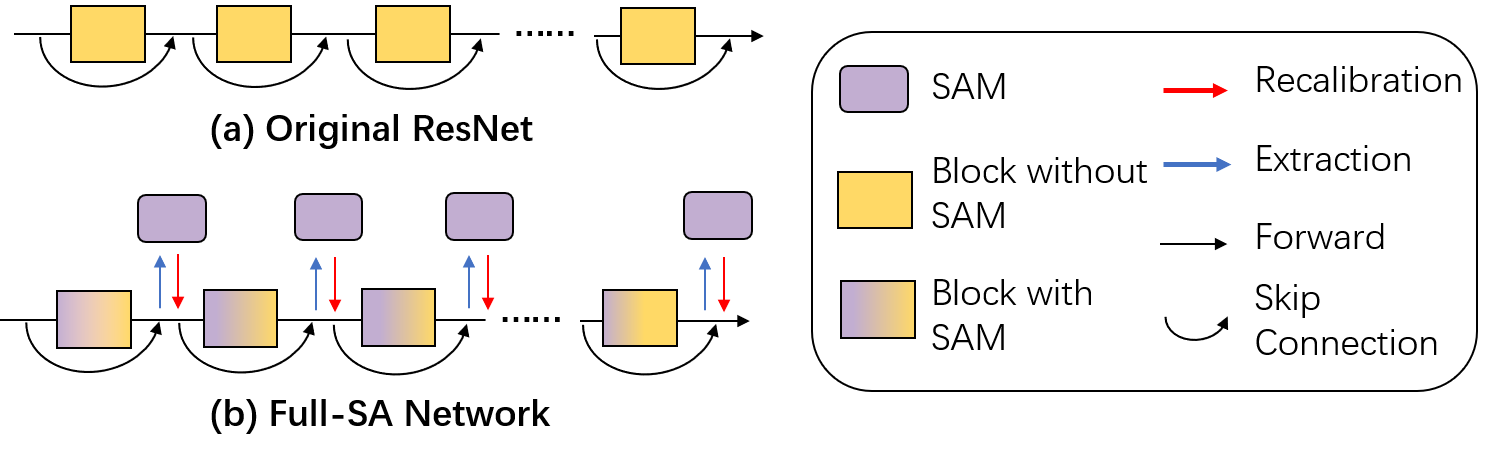}
\caption{ (a) Original ResNet; (b) Full-SA network. A network is called a Full-SA network if the SAM is individually defined for each block. The SAM can be divided into three steps~\cite{huang2020dianet}: (1)~Extraction: the plug-in module extracts internal features of a network by computing their statistics, like mean, variance; (2)~Processing: the SAM utilizes the extracted features to adaptively generate a mask via a trainable module; (3)~Recalibration: the mask is used to calibrate the feature maps by element-wise multiplication or addition. }
\vspace{-0.23cm}
\label{fig:comparsion}
\end{figure}


However, the extra cost of the lightweight SAM is still non-negligible for a deep network~\cite{bianco2018benchmark,li2019spatial}. The main reason lies in the conventional paradigm where the SAMs are individually plugged into every block of a CNN for granted~\cite{woo2018cbam,hu2018squeeze}. The additional inference time and the number of parameters increase with the growth of the network depth, which creates a bottleneck when applying SAMs to a deep network.
On the other hand, the network compression algorithms, such as network pruning~\cite{hecap,liu2018rethinking,molchanov2016pruning} and neural architecture search~\cite{Vidnerov2020Multi,guo2020single}, effectively reduce the network size by removing the redundant components
 while maintaining the accuracy of the slimmed network. Note that these techniques inherently alternate the connections between different ingredients (\textit{e.g.}, neurons, layers, or weights) in the CNN. This inspires us to pay more attention to the connections between the SAMs and the CNN backbone instead of the individual SAM design. If the number of connections is lessened, the computation and parameter cost will be reduced obviously. 
Based on these motivations, we articulate the Lottery Ticket Hypothesis for Self-attention Networks~(LTH4SA): 

\vspace{0.22cm}
\textit{A full self-attention (Full-SA) network contains a subnetwork with sparse self-attention connections that can (1) accelerate inference,~(2) reduce extra parameter increment, and~(3) maintain accuracy.}
\vspace{0.22cm}

Every SAM connects or disconnects to the block, and we call the set of these connection states for a CNN as a connection scheme (see Section~\ref{sec:connectionscheme}). A connection scheme is called a ticket if the self-attention subnetwork with this scheme satisfies the three above-mentioned conditions of LTH4SA. In Section~\ref{sec:lth4sa}, for the first time we both empirically and formally investigate the existence of LTH4SA. 
Our main observations are: (1) Empirically, there exist some self-attention subnetworks with sparse connection schemes achieving even better accuracy than the full self-attention network, which is also supported by our theoretical evidence where the large network can be approximated by its subnetwork; (2) The additional statistical analysis of the connection scheme shows that no specific block of the CNN dominates the accuracy when connecting SAM to the block. 
These observations indicate the key to obtaining a ticket is how to combine different connections of block and the SAMs. Certainly, finding the combination of the blocks and the SAMs to be a ticket is equivalent to solving a searching problem, and the corresponding algorithm should satisfy the following requirements based on the definition of LTH4SA and our empirical observations: (1) The searched connection scheme should be sparse and accurate enough to be a ticket; (2) The algorithm itself should have sufficient capacity to cover diverse connection schemes.
Therefore, we propose a simple yet effective reinforcement-learning-based~(RL-based) baseline method to search for a ticket given that the RL-based method can naturally handle multi-targets searching problems, and the rewards are designed exactly based on the requirements mentioned above. We call our proposed baseline method Efficient Attention Network~(EAN). In Section~\ref{sec:prelim}, we will briefly review the formulation of self-attention networks. Our proposed method for searching a ticket is introduced in Section~\ref{sec:method} and extensive experiments on widely-used benchmark datasets and popular self-attention networks are shown in Section~\ref{sec:result}. The property of EANs will be discussed in Section~\ref{sec:analysis}. Finally, we discuss the related works in Section~\ref{sec:related}. We summarize \textbf{our contribution} as follows: 

\begin{enumerate}

    \item We empirically find some counterintuitive phenomena: (a) Connecting the SAMs to all the blocks may not bring the largest performance boost; (b) Some connection schemes are harmful.
    
    \item  We propose a lottery ticket hypothesis for self-attention networks and provide both numerical and theoretical evidence for the existence of the ticket. Besides, we propose an effective searching method as a baseline to obtain a ticket and avoid harmful connection schemes. 
\end{enumerate}

\begin{figure*}[ht]
\centering
\includegraphics[width=0.9\linewidth]{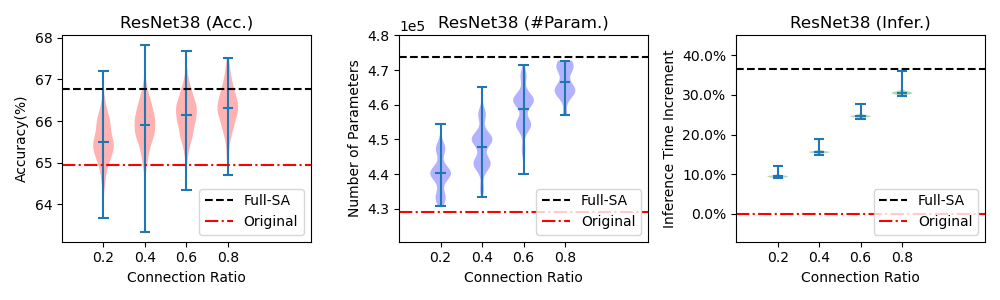}
\includegraphics[width=0.9\linewidth]{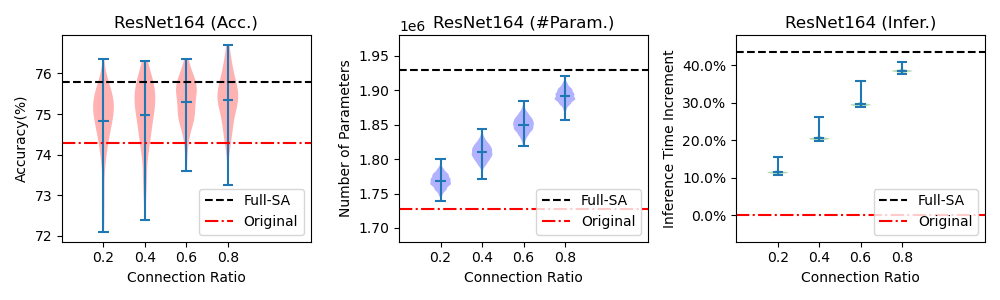}
\caption{Violin plot of accuracy, number of parameters, and inference time increment under different connection ratios. Three bars of each vertical line from the top to the bottom represent maximum, mean, and minimum, respectively. The light color shows the distribution. The black dotted line is the performance of the Full-SA network while the red line is the performance of the original CNN. The network with accuracy higher than the black dotted line is a ticket. }
\label{fig:sanxiantu}
\end{figure*}

\section{Preliminaries}         
\label{sec:prelim}
In this section, we first briefly review ResNet~\cite{he2016deep} and the Full-SA network and then introduce the connection scheme. 

\subsection{ResNet and the Full Self-attention (Full-SA) Network}
\noindent\textbf{ResNet.} The structure of a ResNet is shown in Fig.~\ref{fig:comparsion}~(a). In general, the ResNet has several stages, and each stage, whose feature maps have the same size, is a collection of consecutive blocks. Suppose a ResNet has $m$ blocks. Let $x_\ell$ be the input of the $\ell^{\rm th}$ block and $f_\ell(\cdot)$ be the residual mapping, then the output $x_{\ell+1}$ of the $\ell^{\rm th}$ block is defined as 
\begin{equation}
    x_{\ell+1} = x_\ell + f_\ell(x_\ell).
\end{equation}

\noindent\textbf{Full Self-attention (Full-SA) Network. } 
 A network is called a Full-SA network if the SAM is individually defined for each block as Fig.~\ref{fig:comparsion} (b). Note that the term ``full'' refers to a scenario when all blocks in a network connect to the SAMs.
Many popular SAMs adopt this way to connect with the ResNet backbone~\cite{hu2018squeeze,woo2018cbam}. We denote the SAM in the $\ell^{\rm th}$ block as $M(\cdot; W_\ell)$, where $W_\ell$ are the parameters. Then the attention will be formulated as $M(f_\ell(x_\ell);W_\ell)$ which consists of the extraction and processing operations introduced in Fig.~\ref{fig:comparsion}. In the recalibration step, the attention is applied to the residual output $f_\ell(x_\ell)$, \textit{i.e.},
\begin{equation}
    x_{\ell+1} = x_\ell + M(f_\ell(x_\ell);W_\ell)\odot f_\ell(x_\ell),
    \label{eqn:org-full}
\end{equation}
where $\ell=1,..., m$ and $\odot$ is the element-wise multiplication. Eq.(\ref{eqn:org-full}) indicates that the computational cost and the number of parameters grow with the increasing number of blocks $m$. 

\subsection{Connection Scheme}\label{sec:connectionscheme} Suppose that a ResNet has $m$ blocks. A sequence $\mathbf{a} = (a_1,a_2,\cdots,a_m)$ denotes a connection scheme, where $a_i =1$ if the $i^\text{th}$ block is connected to a SAM, otherwise it equals 0. A subnetwork specified by a scheme $\mathbf{a}$ can be formulated by:
\begin{align}
\begin{split}
    x_{\ell+1} = x_\ell + \Big( a_\ell\cdot M(f_\ell(x_\ell);W_{\ell})&+(1-a_\ell)\cdot \mathbf{1} \Big) \odot f_\ell(x_\ell),
\end{split}
\end{align}
where $\mathbf{1}$ denotes an all-one vector and $\ell$ is from $1$ to $m$. In particular, it becomes a Full-SA network if $\mathbf{a}$ is an all-one vector, or an original ResNet if $\mathbf{a}$ is a zero vector. 

\section{Lottery Ticket Hypothesis for Self-attention }
\label{sec:lth4sa}
In this section, we first study the existence of LTH4SA from empirical and theoretical perspectives. Then we investigate which block we should connect the SAMs to such that the corresponding connection scheme can achieve good accuracy. 

\subsection{Empirical evidence of LTH4SA Existence}
\label{sec:empistudy}
We empirically validate the proposed LTH4SA by investigating the accuracy of the self-attention subnetwork under different connection schemes. 


Specifically, we conduct classification on CIFAR100 using SAMs with the shallow and deep network backbones, \textit{i.e.,} ResNet38 and ResNet164, under different SAM connection ratios, \textit{i.e.,} the ratio of the number of connections to the number of blocks. 
Squeeze-and-Excitation SAM \cite{hu2018squeeze} is used in our experiments. We traverse all the connection schemes for ResNet38. However, as ResNet164 contains 54 blocks, there are $2^{54}$ different connection schemes. Traversal of all schemes is infeasible, and hence we randomly sample 100 connection schemes under each ratio. For simplicity and clear clarification, we choose the connection ratio 0.2, 0.4, 0.6, 0.8 to present the empirical results given that they are sufficient to cover different sparsity levels. Fig.~\ref{fig:sanxiantu} displays the distribution of accuracy, the number of parameters, and inference time increment under different connection ratios. From Fig.~\ref{fig:sanxiantu}, we observe that: 

\begin{enumerate}
    \item For different SAM connection ratios, there exist self-attention subnetworks with higher accuracy than the full self-attention network, even when the connections are very sparse, \textit{e.g.}, the connection ratio is 0.2. 
    \item For different SAM connection ratios, there exist self-attention subnetworks with lower accuracy than the original CNN, which means some connection schemes are harmful.
    \item Under the same connection ratio, the parameter and inference time of different connection schemes vary considerably. 
\end{enumerate}



Observation 1 shows the existence of the subnetwork with sparse connections yet good accuracy, which empirically illustrates the potential of finding some connection schemes that satisfy LTH4SA.
Moreover, Observation 2 and Observation 3 reveal that even though there exist some connection schemes that satisfy the three conditions of LTH4SA, it is still necessary to carefully design methods to find tickets as the sparse connection scheme may harm the accuracy of the network and incur larger parameters and computational cost. 


\subsection{Theoretical Evidence of LTH4SA Existence}
\label{sec:theorstudy}
In Section~\ref{sec:empistudy}, the empirical evidence of the existence of LTH4SA has been demonstrated. Now we provide some theoretical evidence of the existence as well. The hidden neuron can be considered as a SAM as they apply internal information from the previous layer to the following network outputs. 
Hence, we can consider a more general scenario that given a network, there exists a subnetwork that approximates the original network and can be obtained by removing the hidden neurons from the original network. 

We first consider a 1-hidden-layer feed-forward network and the network follows the initialization as \cite{du2018gradient}. 

\begin{theorem}
A 1-hidden-layer feed-forward NN is defined as $NN(x)=W^{2}\sigma(W^1x)$, where input $x\in R^{d}$ with $\|x\|_2\leq 1$, $W^1$ is of size $m\times d$, $W^2$ is of size $1\times m$ and $\sigma$ is ReLU activation. $W^1_{i,j}$ is initialized i.i.d. by the Gaussian distribution $\mathcal{N}(0, {(\frac{1}{\sqrt{m}})}^{2})$, and $W^2_{1,j}$ is initialized by the uniform distribution $Uniform\{1,-1\}$. Let $\mathcal{P}(d-1,\epsilon)$ be $\mathbb{P}\{\chi^2(d-1)\geq \epsilon^2\}$, where $\chi^2(d-1)$ is a chi-square variable with $d-1$ degree of freedom. Then for any $\epsilon, \delta>0$, when the number of hidden neurons $m>\frac{\ln(\delta)}{\ln(\mathcal{P}(d-1,\epsilon))}$, then there exists the row $j$ of $W^1$ such that when we set the row $j$ to be zero, i.e., $B_jW^1$ with $B_j=diag\{1,\cdots,1,0,1,\cdots,1\}$ (the $j ^{\rm th}$ entry is 0), we have 
\begin{align*}
    \|W^{2}\sigma(W^1x)-W^{2}\sigma(B_jW^1x)\|_2<\epsilon,
\end{align*}
with probability higher than $1-\delta$. 
\label{theo:1}
\end{theorem}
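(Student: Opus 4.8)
The plan is to exploit the fact that zeroing a single row of $W^1$ changes the network output by \emph{only} that row's contribution, and then to treat the $m$ rows as $m$ independent chances to find a negligible one. First I would expand the difference and use that $B_j$ kills exactly the $j^{\rm th}$ coordinate: for $i\neq j$ the pre-activations $(W^1x)_i$ and $(B_jW^1x)_i$ coincide, while the $j^{\rm th}$ pre-activation is sent to $0$. Since $\sigma(0)=0$, every term cancels except the $j^{\rm th}$, giving
\begin{align*}
W^{2}\sigma(W^1x)-W^{2}\sigma(B_jW^1x)=W^2_{1,j}\,\sigma\big(\langle w_j,x\rangle\big),
\end{align*}
where $w_j$ is the $j^{\rm th}$ row of $W^1$. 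Because the output is a scalar and $|W^2_{1,j}|=1$, the left-hand norm equals $\sigma(\langle w_j,x\rangle)\ge 0$, so the desired $\epsilon$-bound is equivalent to the event $\{\langle w_j,x\rangle<\epsilon\}$.

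Next I would convert this into a statement about the row norm. By Cauchy--Schwarz, $\langle w_j,x\rangle\le\|w_j\|_2\|x\|_2\le\|w_j\|_2$ for every admissible input, so it suffices (and, taking $x$ along $w_j$, is essentially necessary) that $\|w_j\|_2<\epsilon$; I would therefore declare neuron $j$ a \emph{failure} precisely on the complementary event. The key probabilistic computation is to evaluate this single-neuron failure probability: under the stated Gaussian initialization the (appropriately scaled) squared row norm is a chi-square variable, and I would show that the failure event is exactly $\{\chi^2(d-1)\ge\epsilon^2\}$, i.e.\ it has probability $\mathcal{P}(d-1,\epsilon)$. The cleanest route is to use rotational invariance of the isotropic Gaussian to align the input direction with a coordinate axis, which separates the one ``aligned'' degree of freedom from the remaining $d-1$ and pins down the degrees of freedom in the tail.

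With the per-neuron failure probability in hand, the final step is a standard independence argument. The rows $w_1,\dots,w_m$ are i.i.d., hence their failure events are mutually independent, and the probability that \emph{every} neuron fails is the product $\mathcal{P}(d-1,\epsilon)^{m}$. Requiring a surviving (good) neuron with probability exceeding $1-\delta$ is therefore equivalent to $\mathcal{P}(d-1,\epsilon)^{m}<\delta$; taking logarithms and dividing by $\ln\mathcal{P}(d-1,\epsilon)<0$ (which flips the inequality) yields exactly the hypothesis $m>\ln(\delta)/\ln(\mathcal{P}(d-1,\epsilon))$, completing the argument.

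I expect the main obstacle to be the single-neuron probability computation rather than the reduction or the independence bookkeeping: one must justify the passage from ``holds for all $x$ with $\|x\|_2\le1$'' to a clean condition on $\|w_j\|_2$, and then identify the precise chi-square law---its $d-1$ degrees of freedom and the threshold $\epsilon^2$---induced by the $\mathcal{N}(0,1/m)$ initialization. Getting the scaling and the degree-of-freedom count to match $\mathcal{P}(d-1,\epsilon)$ exactly (e.g.\ accounting for how the direction of $x$ consumes one degree of freedom) is the delicate part; the rest is essentially algebra and a union-type estimate.
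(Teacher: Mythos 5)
Your reduction of the difference to the single neuron $j$ is correct and in fact sharper than what the paper does: you get the exact identity $W^{2}\sigma(W^1x)-W^{2}\sigma(B_jW^1x)=W^2_{1,j}\,\sigma(\langle w_j,x\rangle)$, hence a bound by $\|w_j\|_2$ alone, whereas the paper bounds the difference crudely by $\|W^2\|_2\|G\|_2\|W^1-B_jW^1\|_2\|x\|_2\leq\sqrt{m}\,\|w_j\|_2$ (writing $\sigma(W^1x)=GW^1x$ with $G$ the diagonal activation-pattern matrix). The independence-and-logarithm bookkeeping at the end is identical in both arguments. The genuine gap is exactly where you predicted it: your per-neuron failure event is $\{\|w_j\|_2\geq\epsilon\}$, and since $\|w_j\|_2^2\sim\frac{1}{m}\chi^2(d)$ under the $\mathcal{N}(0,1/m)$ initialization, its probability is $\mathbb{P}\{\chi^2(d)\geq m\epsilon^2\}$ --- a quantity that still depends on $m$ and is \emph{not} $\mathcal{P}(d-1,\epsilon)=\mathbb{P}\{\chi^2(d-1)\geq\epsilon^2\}$. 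Your proposed fix via rotational invariance cannot close this: because the bound must hold uniformly over all $x$ with $\|x\|_2\leq 1$, the relevant event is pinned to the full row norm, which involves all $d$ coordinates; no degree of freedom is ``consumed'' by the direction of $x$.

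The paper lands on the stated threshold precisely because its looser deterministic bound forces the stronger requirement $\|w_j\|_2<\epsilon/\sqrt{m}$; then $\mathbb{P}\{\|w_j\|_2\geq\epsilon/\sqrt{m}\}=\mathbb{P}\{\tfrac{1}{m}\chi^2(\cdot)\geq\tfrac{\epsilon^2}{m}\}=\mathbb{P}\{\chi^2(\cdot)\geq\epsilon^2\}$, so the $\sqrt{m}$ from $\|W^2\|_2$ cancels against the $1/\sqrt{m}$ initialization scale and the per-row failure probability becomes $m$-free, which is what makes the closed-form condition $m>\ln(\delta)/\ln(\mathcal{P}(d-1,\epsilon))$ come out. (As an aside, the degrees-of-freedom count should arguably be $d$ rather than $d-1$ in both the statement and the paper's proof, since the row has $d$ i.i.d.\ Gaussian entries; this discrepancy is in the paper itself, not something your argument introduces.) Your route, completed honestly, proves a stronger conclusion for a given $m$ but with an implicit condition $\mathbb{P}\{\chi^2(d)\geq m\epsilon^2\}^m<\delta$ rather than the theorem's explicit one; to reproduce the theorem as stated you would need to either adopt the paper's $\sqrt{m}$-inflated bound or rescale your threshold to $\epsilon/\sqrt{m}$.
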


The proof for Thm.~\ref{theo:1} is in Appendix. Thm.~\ref{theo:1} shows that when the width of the network is sufficiently large, we can find a subnetwork that approximates the original network with high probability. Next, we consider a more general and modern network structure, \textit{i.e.}, the ResNet~\cite{he2016deep} with ReLU. 

\begin{theorem}

Let $T(x)$ be a Lipschitz continuous and Lebesgue integrable function in $d$-dimensional compact set $K$. And $R_{\text{full}}(x,\theta_{\text{full}})$ is a ReLU ResNet structure with parameters $\theta_{\text{full}}$. Let $\epsilon_0>0$ be a constant. Suppose that there exists $\theta_{\text{full}}^0$ such that
$
    \int_{K}|R_{\text{full}}(x,\theta_{\text{full}}^0) - T|dx \leq \frac{\epsilon_0}{2}.
$
If the width of each layer in $R_{\text{full}}(x,\theta_{\text{full}})$ is larger than $d$ and the depth of $R_{\text{full}}(x,\theta_{\text{full}})$ is larger than a constant that depends on $\epsilon_0$, then for any $\epsilon \in (\epsilon_0, 1)$, there exists a subnetwork $R_\text{sub}(x)$ of $R_{\text{full}}(x,\theta_{\text{full}})$ such that 
\begin{equation}
    \int_{K}|R_{\text{full}}(x,\theta_{\text{full}}^0) - R_\text{sub}(x)|dx \leq \epsilon.
\end{equation}
\label{theo:2}
\end{theorem}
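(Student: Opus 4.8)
The plan is to decouple the two halves of the error budget $\epsilon_0$: one half is already consumed by the hypothesis $\int_K |R_{\text{full}}(x,\theta_{\text{full}}^0)-T|\,dx \le \epsilon_0/2$, and I would spend the other half on constructing a subnetwork that itself reproduces the target $T$. Concretely, I would first invoke the triangle inequality
\begin{equation}
\int_K |R_{\text{full}}(x,\theta_{\text{full}}^0)-R_\text{sub}(x)|\,dx \le \int_K |R_{\text{full}}(x,\theta_{\text{full}}^0)-T|\,dx + \int_K |T-R_\text{sub}(x)|\,dx,
\end{equation}
so that the problem reduces to exhibiting a subnetwork $R_\text{sub}$ of the given architecture with $\int_K |T-R_\text{sub}|\,dx \le \epsilon_0/2$. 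Since $\epsilon_0<\epsilon$ for every admissible $\epsilon$, this single target accuracy $\epsilon_0/2$ for $T$ already yields total error $\le \epsilon_0/2+\epsilon_0/2=\epsilon_0<\epsilon$ uniformly over $\epsilon\in(\epsilon_0,1)$, which is exactly why the required depth must depend only on $\epsilon_0$ and not on the particular $\epsilon$.

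The heart of the argument is therefore an $L^1$ universal-approximation statement for ReLU ResNets of bounded width. I would invoke (or adapt) the fact that ResNets whose hidden layers have width at least $d+1$ are universal approximators of Lipschitz, integrable functions on a compact $K$, with the depth needed to reach an accuracy $\eta$ growing as an explicit function of $\eta$, the Lipschitz constant of $T$, and $\mathrm{diam}(K)$. The hypotheses are tailored to feed this result: the assumption that each layer of $R_{\text{full}}$ has width larger than $d$ means that, after restricting each layer to $d+1$ active neurons, the surviving sub-architecture still meets the minimal-width requirement; and the assumption that the depth of $R_{\text{full}}$ exceeds a constant depending on $\epsilon_0$ guarantees enough blocks to realize the approximating narrow ResNet at accuracy $\eta=\epsilon_0/2$. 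The subnetwork $R_\text{sub}$ is then this embedded narrow ResNet, with its retained weights chosen to achieve the approximation, the pruned connections contributing nothing and the idle residual blocks passing their input through the identity.

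I expect the main obstacle to be the quantitative minimal-width step rather than the surrounding bookkeeping. Unlike Theorem~\ref{theo:1}, where a single neuron is deleted from one very wide layer and a direct probabilistic estimate controls the perturbation, here the subnetwork must be rebuilt from a width budget only marginally above $d$, so no naive neuron-deletion argument applies; one genuinely needs a constructive $L^1$ approximation by depth-$L$, width-$(d+1)$ ResNets together with an explicit depth-versus-accuracy bound. I would either cite an existing minimal-width universality result and check that its depth estimate is a function of $\epsilon_0$ alone, or, for a self-contained version, construct $R_\text{sub}$ by partitioning $K$ into small cubes and using the $d+1$ channels and the identity shortcuts of successive residual blocks to accumulate a piecewise-affine approximant of $T$ whose $L^1$ error falls below $\epsilon_0/2$ once the number of blocks is large enough. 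Care is needed to keep every intermediate representation inside the width-$(d+1)$ bound and to verify that the pruned object is genuinely a subnetwork of $R_{\text{full}}(x,\theta_{\text{full}})$, but these are structural checks rather than analytic difficulties.
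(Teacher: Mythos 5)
Your proposal is correct and follows essentially the same route as the paper's proof: a triangle inequality through $T$, the $L^1$ universal approximation of Lipschitz functions by narrow (width $\le d$) ReLU ResNets with a depth bound $O((L/\epsilon_0)^d)$ derived from the modulus of continuity, and an embedding of that narrow deep network into $R_{\text{full}}$ by padding idle blocks with zero weights so the skip connections act as the identity. The only cosmetic difference is the budget split ($\epsilon_0/2+\epsilon_0/2=\epsilon_0<\epsilon$ in yours versus $\epsilon/2+\epsilon_0/2\le\epsilon$ in the paper), which changes nothing substantive.
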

The proof for Thm.~\ref{theo:2} is in Appendix. In industrial applications, it is not necessary for the discrepancy between the output of two networks to be arbitrarily small if they have comparable performance. In practice, the discrepancy is acceptable if it reaches some certain levels, such as $\epsilon_0=10^{-5}$ or $10^{-10}$. When a sufficiently small discrepancy $\epsilon_0$ is given, Thm.~\ref{theo:2} can guarantee that a large-size network contains a subnetwork that has similar performance.
\subsection{Which block should we connect the SAMs to?}
\label{sec:which_block}


By studying the statistical characteristics of tickets and some harmful connection schemes, in this part, we investigate which block we should connect the SAMs to such that the corresponding connection scheme can achieve good accuracy. 

We consider a statistics called connection score which characterizes the frequency of the connections of a scheme set. Given a network with $m$ blocks and  a set of $N$ connection schemes with each scheme $\mathbf{a}_i = (a_{i1},a_{i2},...,a_{im}), a_{ij} \in\{0,1\},i=1,...,N, j=1,...,m$, we define the connection score of a scheme set as follows,
\begin{equation}
    \left(\frac{1}{N}\sum_{i=1}^Na_{i1},\frac{1}{N}\sum_{i=1}^Na_{i2},...,\frac{1}{N}\sum_{i=1}^Na_{im}\right).
\end{equation}
This statistic can characterize the importance of each block based on their frequency of connecting with the SAM. If the connection score of a block is large, the connection between this block and the SAM appears in large portion among $N$ connection schemes. 

We consider two sets of connection schemes, \textit{i.e.}, the ticket set and the bad scheme set, for ResNet38 as presented in Fig.~\ref{fig:full_ori}. The ticket set stands for the set of schemes that satisfy the LTH4SA, while the bad scheme set stands for a set of schemes whose accuracy is lower than the original network. 

From Fig.~\ref{fig:full_ori}, we can observe that the connection score of each block is almost the same for both the ticket set and bad scheme set. Besides, for each set, we use univariate linear regression to fit these scores, and use slope to characterize their trends. We can see that the slopes of two scheme set are close to zero. These observations indicate no specific block of the network will dominate the accuracy, and each block can be connected to the SAMs with almost equal frequency in a ticket. 


 \begin{figure}[t]
    \centering
    \includegraphics[width=0.95\linewidth]{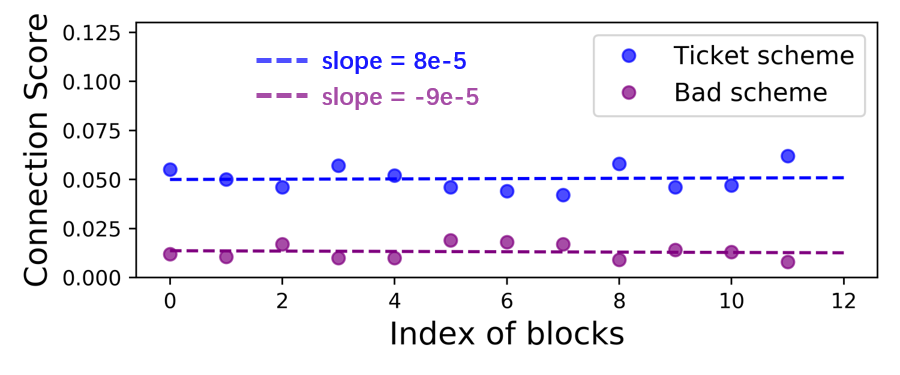}
    \caption{The comparison of the connection scores of the ticket set or the bad scheme set with ResNet38.}
    \label{fig:full_ori}
\end{figure}
Since no specific block dominates the accuracy, it is not easy to define a metric to identify the importance for the connection of each block as the network pruning algorithms do~\cite{huang2021rethinking,he2019filter}. Hence, it is necessary to design an effective search method to find the tickets from the thousands of possible connection schemes.

\section{Proposed baseline method}
\label{sec:method}


In this section, we introduce the proposed method which consists of two parts. First, we pre-train a supernet as the search space. The supernet assembles different candidate network architectures into a single network by weight sharing~\cite{pmlr-v139-wang21i}. Each candidate architecture corresponds to a subnetwork and in our problem, each connection scheme corresponds to a specific subnetwork sampled from the supernet. Second, we use a policy-gradient-based method to search for an optimal connection scheme from the supernet. The basic workflow of our method is shown in Alg.\ref{alg:ean}. 

\begin{algorithm}[t]  
    \caption{Searching a ticket of LTH4SA}
    \label{alg:ean}   
    \textbf{Input:} Training set $D_\text{train}$; validation set $D_\text{val}$; a Full-SA attention network $\Omega(\mathbf{x}|\mathbf{1})$; the pre-training step $K$; the searching step $T$; the probability of retaining connection $\beta$. 
    
    \textbf{Output:} The trained controller $\chi_\theta(q_0)$.
        
    \begin{algorithmic}[1]

    \State\algorithmiccomment{Pre-train the supernet}

    \For{$t$ from 1 to $K$} 

        \State $\mathbf{a} \sim [Bernoulli(\beta)]^m$
        \State train $\Omega(\mathbf{x}|\mathbf{a})$ with $D_\text{train}$
    \EndFor  
    
    \State\algorithmiccomment{Policy-gradient-based search}       
       \For{$t$ from 1 to $T$} 

        \State $\mathbf{p}_\theta \gets \chi_\theta(q_0)$
        \State $\mathbf{a} \sim \mathbf{p}_\theta$
        \State Calculate the rewards $g_\text{spa}$,$g_\text{val}$,$g_\text{rnd}$ (see Appendix)
        \State Update the trainable parameters.
       

    
    \EndFor
    \State\Return $\chi_\theta(q_0)$
    \end{algorithmic} 
\end{algorithm}

\subsection{Problem Description}
\label{sec:setup}
According to LTH4SA, our goal is (1) to find a connection scheme $\mathbf{a}$, which is sparse enough for less computational cost and parameters, from $2^m$ possibilities; (2) to ensure that the subnetwork specified by the scheme can maintain the accuracy as the Full-SA network. 

To determine the optimal architecture from the pool of candidates, it is costly to evaluate all the candidates' performances after training from scratch, since even training a candidate individually from scratch will require a large number of computation times~(e.g., tens of hours), not to mention traversing such an extensive pool. In many related works on Neural Architecture Search~(NAS), the validation accuracy of the candidates sampled from a supernet can be served as a satisfactory performance proxy~\cite{guo2020single,you2020greedynas,chu2019fairnas} to approximately estimate those candidates' stand-alone\footnote{Train the subnetworks from scratch} performance, which can effectively reduce the extensive computational cost correspondingly. Thus similarly, to efficiently obtain the optimal connection scheme, we propose to train the supernet as the search space. We follow the DropAct~\cite{liang2021drop} training strategy to train the supernet. 
 Then we consider the validation performance of the sampled subnetworks from the supernet as the proxy for their stand-alone performance.
We consider a supernet $\Omega(\mathbf{x}|\mathbf{a})$ with $m$ blocks and input $\mathbf{x}$. $\Omega(\mathbf{x}|\mathbf{a})$ has the same components as a Full-SA network, but its connections between blocks and SAMs are specified by $\mathbf{a}$.  



\subsection{Pre-training the Supernet}
\label{sec:pretrain}

Given a dataset, we split all training samples into the training set $D_\text{train}$ and the validation set $D_\text{val}$. To train the supernet, we activate or deactivate the SAM in each block of it randomly during optimization. Specifically, we first initialize a supernet $\Omega(\mathbf{x}|\mathbf{a}^{(0)})$, where $\mathbf{a}^{(0)} = (1,\cdots,1)$. At the iteration $t$, we randomly draw a connection scheme $\mathbf{a}^{(t)}=(a^t_1, \cdots, a^t_m)$, where $a^t_i$ is sampled from a Bernoulli distribution $Bernoulli(\beta)$. 
Then we train subnetwork $\Omega(\mathbf{x}|\mathbf{a}^{(t)})$ with the scheme $\mathbf{a}^{(t)}$ from the supernet on $D_\text{train}$ via weight sharing. More detail of training the supernet is provided in Appendix.

\subsection{Training Controller with Policy Gradient}
We introduce the steps to search for the optimal connection scheme. Concretely, we use a controller to generate connection schemes and update the controller by policy gradient. 

We use a fully connected network as the controller $\chi_\theta(q_0)$ to produce the connection schemes, where $\theta$ are the learnable parameters, and $q_0$ is a constant vector $\mathbf{0}$. 
The output of $\chi_\theta(q_0)$ is $\mathbf{p_\theta}$, where $\mathbf{p_\theta} = (p_\theta^1,p_\theta^2,...,p_\theta^m)$ and $p_\theta^i$ represents the probability of connecting the SAM to the $i^\text{th}$ block. A realization of $\mathbf{a}$ is sampled from the controller output, \textit{i.e.}, $\mathbf{a} \sim \mathbf{p}_\theta$. The probability associated with the scheme $\mathbf{a}$ is $\mathbf{\hat{p}_\theta} = (\hat{p}_\theta^1,\hat{p}_\theta^2,...,\hat{p}_\theta^m)$, where $\hat{p}_\theta^i = (1-a_i)(1-p_\theta^i)+a_ip_\theta^i$. 

We denote $G(\mathbf{a})$ as a reward for $\mathbf{a}$. The parameter set $\theta$ within the controller can be updated via policy gradient with learning rate $\eta$, \textit{i.e.}, 
\begin{align}
\begin{split}
    R_\theta = G(\mathbf{a})\cdot\sum_{i=1}^m\log \hat{p}_\theta^i,\quad
     \theta \gets \theta + \eta\cdot \nabla {R}_\theta.
\end{split}
    \label{eqn:policy_gradient}
\end{align}
In this way, the controller tends to output the probability that results in a large reward $G$. Therefore, designing a reasonable $G$ can help us search for a good structure. 

To find a ticket, we should incorporate the accuracy and connection ratio into the reward $G$. We use the validation accuracy $g_\text{val}$ of the subnetwork $\Omega(\mathbf{x}|\mathbf{a})$ sampled from the supernet as a reward, which depicts the performance of its structure. Besides, we complement a sparsity reward $g_\text{spa}$ to encourage the controller to generate the schemes with fewer connections between SAMs and backbone. Finally, to encourage the controller to explore more potentially useful connection schemes, we add the Random Network Distillation~(RND) curiosity bonus $g_\text{rnd}$ in our reward~\cite{burda2018exploration}. Therefore, $G(\mathbf{a}) = \lambda_1\cdot g_\text{spa} + \lambda_2 \cdot
    g_\text{val}+ \lambda_3 \cdot g_\text{rnd}$, 
where $\lambda_1, \lambda_2, \lambda_3$ are the coefficient for each bonus. The detailed definition of $g_\text{spa},g_\text{val},$ and $g_\text{rnd}$ can be found in Appendix. 
  
\section{Experiments}
\label{sec:result}
In this section, we demonstrate the effectiveness of our method in finding the ticket. First, we show that our method can outperform some popular NAS and pruning algorithms. Next, to further reduce the number of parameters for various types of SAMs, we search the ticket from another self-attention framework proposed in \cite{huang2020dianet}. Finally, we conduct a comprehensive comparison with various searching methods.

\subsection{Datasets and Settings} 

 On CIFAR100 \cite{cifar} and ImageNet2012~\cite{ILSVRC15} datasets, we conduct classification using ResNet~\cite{he2016deep} backbone with different SAMs, including Squeeze-and-Excitation (SE) \cite{hu2018squeeze}, Spatial Group-wise Enhance (SGE) \cite{li2019spatial} and Dense-Implicit-Attention (DIA) \cite{huang2020dianet} modules. The description of these SAMs is in Appendix. 
 Since the networks with SAMs have extra computational cost compared with the original backbone inevitably, we formulate the relative inference time increment to represent the relative speed of different self-attention networks, \textit{i.e.},
 \begin{equation}
 \frac{I_t({\rm CNN \ with \ SAMs}) - I_t({\rm Original \ CNN})}{I_t({\rm Original \ CNN})} \times 100\%,
 \label{eqn:inference}
 \end{equation}
where $I_t(\cdot)$ denotes the inference time of the network.
The inference time is measured by forwarding the data of batch size 50 for 1000 times. 

 \textbf{CIFAR100.} CIFAR100 consists of 50k training images and 10k test images of size 32 by 32. In our implementation, we choose 10k images from the training images as a validation set (100 images for each class, 100 classes in total), and the remainder images as a sub-training set. Regarding the experimental settings of ResNet164~\cite{he2016deep} backbone with different SAMs, the supernet is trained for 150 epochs, and the search step $T$ is set to be 1000. 
 
 \textbf{ImageNet2012.} ImageNet2012 comprises 1.28 million training images. We split 100k images (100 from each class and 1000 classes in total) as the validation set and the remainder as the sub-training set. The testing set includes 50k images. Besides, the random cropping of 224 by 224 is used. Regarding the experimental settings of ResNet50~\cite{he2016deep} backbone with different SAMs, the supernet is trained for 40 epochs, and the search step $T$ is set to be 300. 
 

\begin{table*}
\caption{Comparison of relative inference time increment (denoted by Infere. (\%) as in Eq.(\ref{eqn:inference})), the number of parameters (\#P (M)), and test accuracy (Acc.) on CIFAR100. Here the connection schemes are searched with SE module on ResNet38 and ResNet164, using different searching methods. ``Ticket?'' presents whether the found connection scheme is a ticket or belongs to top 5\% high accuracy. We train a supernet with the probability  $\beta$ of retaining connection as in Alg.~\ref{alg:ean}.}
  \centering
  \begin{adjustbox}{width=0.95\textwidth,center}
    \begin{tabular}{clcccllrclcccl}
\cmidrule{1-7}\cmidrule{9-14} 
\cmidrule{1-7}\cmidrule{9-14}  \multicolumn{7}{c}{ResNet38}                          &       & \multicolumn{6}{c}{ResNet164} \\
\cmidrule{1-7}\cmidrule{9-14}    $\beta$ & \multicolumn{1}{c}{Method} & Acc.  & \#P (M)   & Infere. (\%) & \multicolumn{1}{c}{Top5\%?} & \multicolumn{1}{c}{Ticket?} &       & $\beta$ & \multicolumn{1}{c}{Method} & Acc.  & \#P (M)  & Infere. (\%)  & \multicolumn{1}{c}{Ticket?} \\
\cmidrule{1-7}\cmidrule{9-14}    \multicolumn{1}{c}{\multirow{7}[4]{*}{\newline{}0.2}} & ENAS  & 64.94 & 0.43      & 0.00      & \XSolidBrush & \XSolidBrush &       & \multicolumn{1}{c}{\multirow{7}[4]{*}{\newline{}0.2}} & ENAS  & 74.29 & 1.73      & 0.00       & \XSolidBrush \\
          & DARTS & 66.07 & 0.44     & 13.04      & \XSolidBrush & \XSolidBrush &       &       & DARTS & 74.42 & 1.74      & 8.43       & \XSolidBrush \\
          & GA    & 65.25 & 0.47 &	32.88      & \XSolidBrush & \XSolidBrush &       &       & GA    & 76.07 &	1.80 &	15.64       & \Checkmark \\
          & $\ell_1$ & 66.06 &	0.45 &	6.37      & \XSolidBrush & \XSolidBrush &       &       & $\ell_1$ & 74.50	&1.81 	&8.92       & \XSolidBrush \\
           & GM    & 66.39	& 0.45 &	6.45      & \XSolidBrush & \XSolidBrush &       &       & GM    & 75.09 &	1.81 &	8.24      & \XSolidBrush \\
\cmidrule{2-7}\cmidrule{10-14}          & EAN   & \textbf{66.78} & 0.45 &	23.21      & \XSolidBrush & \Checkmark &       &       & EAN   & \textbf{76.53} & 1.85 &	25.84       & \Checkmark \\
\cmidrule{1-7}\cmidrule{9-14}    \multicolumn{1}{c}{\multirow{7}[4]{*}{\newline{}0.5}} & ENAS  & 65.09 &	0.45 &	26.55      & \XSolidBrush & \XSolidBrush &       & \multicolumn{1}{c}{\multirow{7}[4]{*}{\newline{}0.5}} & ENAS  & 75.33&	1.81& 	18.13       & \XSolidBrush \\
          & DARTS & 66.06 &	0.44 & 20.02      & \XSolidBrush & \XSolidBrush &       &       & DARTS & 73.44 &	1.85 &	21.97       & \XSolidBrush \\
          & GA    & 65.57 &	0.44 &	19.82      & \XSolidBrush & \XSolidBrush &       &       & GA    & 75.75 &	1.76 &	15.21       & \XSolidBrush \\
          & $\ell_1$ & 65.86 &	0.47 &	20.00      & \XSolidBrush & \XSolidBrush &       &       & $\ell_1$ &73.79	 &1.90 &	23.23       & \XSolidBrush \\
           & GM    & 66.27	& 0.46 &	19.50      & \XSolidBrush & \XSolidBrush &       &       & GM    & 75.37 &	1.90 &	22.04       & \XSolidBrush \\
\cmidrule{2-7}\cmidrule{10-14}          & EAN   & \textbf{66.90} & 0.45 &	26.52  & \Checkmark & \Checkmark &       &       & EAN   & \textbf{76.21} & 1.82 &	22.41       & \Checkmark \\
\cmidrule{1-7}\cmidrule{9-14}    \multicolumn{1}{c}{\multirow{7}[4]{*}{\newline{}0.8}} & ENAS  & 66.77 & 0.47 &	38.98      & \XSolidBrush & \XSolidBrush &       & \multicolumn{1}{c}{\multirow{7}[4]{*}{\newline{}0.8}} & ENAS  & 75.80 & 1.93 &	43.56       & \XSolidBrush \\
          & DARTS & 66.67&	0.46& 	29.55      & \XSolidBrush & \XSolidBrush &       &       & DARTS & 75.42&	1.90 &	34.42       & \XSolidBrush \\
          & GA    & 66.21 & 0.46 &	33.61      & \XSolidBrush & \XSolidBrush &       &       & GA    & 75.10 & 1.78 &	13.91       & \XSolidBrush \\
          & $\ell_1$ & 66.32 & 0.47 &	29.60      & \XSolidBrush & \XSolidBrush &       &       & $\ell_1$ & \textbf{76.27} &	1.92 &	34.66       & \Checkmark \\
        &  GM    & 66.61 &	0.47 &	29.60      & \XSolidBrush & \XSolidBrush &       &       & GM    & 75.55 & 1.92 & 	34.29       & \XSolidBrush \\
\cmidrule{2-7}\cmidrule{10-14}          & EAN   & \textbf{67.06} & 0.46 &	19.98      & \Checkmark & \Checkmark &       &       & EAN   & 75.80 & 1.82 &	20.64       & \Checkmark \\
\cmidrule{1-7}\cmidrule{9-14} 
        & Original CNN   & 64.94 & 0.43 &	0.00      & - & - &       &       & Original CNN   & 74.29 & 1.73 &	0.00       & -\\
\cmidrule{1-7}\cmidrule{9-14}  
\cmidrule{1-7}\cmidrule{9-14}  \end{tabular}%
\end{adjustbox}

  \label{tab:dabiao}%
\end{table*}%

 \subsection{Searching Tickets from the Full-SA Network}
 \label{sec:fullsa}
In this part, we compare our search method with some popular NAS and 
pruning algorithms, \textit{e.g.}, Genetic Algorithm (GA)~\cite{Vidnerov2020Multi}, ENAS~\cite{pham2018efficient} and DARTS~\cite{liu2018darts}, $\ell_1$ pruning, and Geometric Median (GM) pruning~\cite{he2019filter}. Table~\ref{tab:dabiao} displays the experiments conducted on CIFAR100 with Full-SA networks including ResNet38 and ResNet164 using SE for different searching methods under different supernets. 

From Table~\ref{tab:dabiao}, our method outperforms the heuristic method GA. Different from DARTS that searches schemes by minimizing the validation loss, the RL-based method~(e.g., ENAS and our baseline method) can directly consider the validation accuracy as a reward although the accuracy or sparsity constraint is not differentiable. However, ENAS does not learn an effective scheme from the reward because of its controller architecture as discussed in Section~\ref{sec:comp}. 

Since the pruning algorithms ($\ell_1$ and GM) are based on intuitive design \cite{huang2020convolution} to measure the importance of the connection individually and ignore the combination of the connections of a network as mentioned in Section~\ref{sec:which_block}, they may fail to find the reasonable connection scheme and obtain unstable results in some cases.

 \subsection{Searching Tickets from Full-Share Network}
 We demonstrate that our search method can also be applied to other self-attention frameworks, such as sharing mechanism~\cite{huang2020dianet}, which shares a SAM with the same set of parameters to different blocks in the same stage. We call a network as Full-Share network if each block in the same stage of this network connects to a shared SAM. The shared SAM significantly reduces the trainable parameters compared with the Full-SA network. 
 
   The connection scheme found by our method from a Full-Share network is called Share-EAN, as illustrated in Fig.~\ref{fig:comparsion2} (b). 
We show the test accuracy, the number of parameters, and relative inference time increment on CIFAR100 and ImageNet2012 of Share-EAN in Table~\ref{tab:all_results}, searching from the supernet trained with retaining ratio $\beta=0.5$. 

From Table~\ref{tab:all_results}, we can observe that most results of Share-EAN satisfy the three conditions of LTH4SA. Since both Share-EAN and the Full-Share network use sharing mechanism~\cite{huang2020dianet} to implement the SAM over the original ResNet, they both have fewer parameters increment than the Full-SA network. Besides, Share-EANs achieve faster inference speed among the Full-Share network and the Full-SA network. 
Furthermore, the accuracy of Share-EANs is on par or even surpass that of the Full-Share networks. 

\begin{figure}[t]
\centering
\includegraphics[width=1\linewidth]{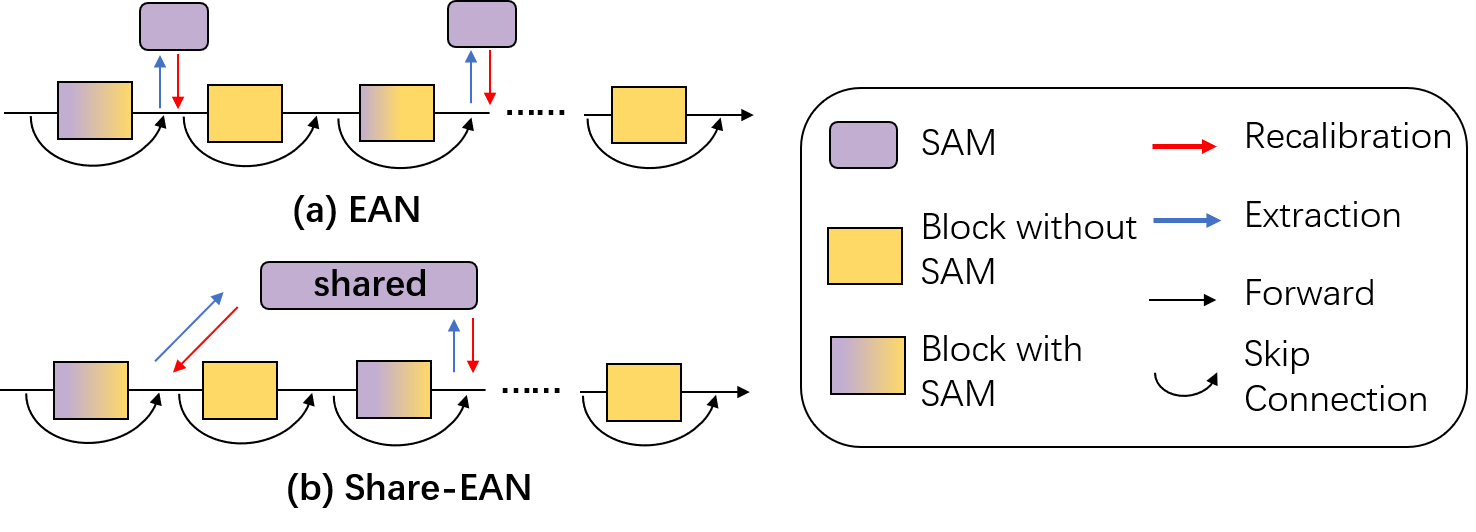}
\caption{The network structures of  (a) EAN and (b) Share-EAN.  }
\label{fig:comparsion2}
\end{figure}

 \begin{table*}[ht]
    \caption{Comparison of the relative inference time increment (see Eq.(\ref{eqn:inference})), the number of parameters, and test accuracy between various SAMs on CIFAR100 and ImageNet2012. ``Original CNN'' stands for ResNet164 backbone in CIFAR100 and ResNet50 backbone in ImageNet. The SAM in DIA is a RNN so DIA network does not have Full-SA structure.}
  \centering
  \begin{adjustbox}{width=\textwidth,center}
    \begin{tabular}{cllccclccclccc}
    \toprule
    \multirow{2}[4]{*}{Dataset} & \multicolumn{1}{c}{\multirow{2}[4]{*}{Model}} &       & \multicolumn{3}{c}{Test Accuracy (\%)} &       & \multicolumn{3}{c}{Parameters (M)} &       & \multicolumn{3}{c}{Relative Inference Time Increment (\%)} \\
\cmidrule{4-6}\cmidrule{8-10}\cmidrule{12-14}          &       &       & Full-SA & Full-Share & Share-EAN   &       & Full-SA & Full-Share & Share-EAN   &       & Full-SA & Full-Share & Share-EAN  \\
    \midrule
    \multirow{4}{*}{\rotatebox{90}{CIFAR100}} & Original CNN   &       & 74.29  & -     & -     &       & 1.727  & -     & -     &       & 0.00  & -     & - \\
          & SE~\cite{hu2018squeeze} &       & 75.80  & 76.09  & \textbf{76.93} &       & 1.929  & 1.739  & \textbf{1.739} &       & {43.56}  & 41.66  & \textbf{18.81}~({\color{black}{$\downarrow$ 22.85}}) \\
          & SGE~\cite{li2019spatial} &       & 75.75  & 76.17  & \textbf{76.36} &       & 1.728  & 1.727  & \textbf{1.727} &       & 93.60  & 93.41  & \textbf{50.49}~({\color{black}{$\downarrow$ 42.92}}) \\
          & DIA~\cite{huang2020dianet} &       & -     & \textbf{77.26} & 77.12  &       & -     & 1.946  & \textbf{1.946} &       & -     & 121.11  & \textbf{65.46}~({\color{black}{$\downarrow$ 55.65}}) \\
    \midrule
    \multirow{4}{*}{\rotatebox{90}{ImageNet}} & Original CNN   &       & 76.01  & -     & -     &       & 25.584  & -     & -     &       & 0.00  & -     & - \\
          & SE~\cite{hu2018squeeze}    &       & 77.01  & 77.35  & \textbf{77.40} &       & 28.115  & 26.284  & \textbf{26.284} &       & 25.94  & 25.92  & \textbf{10.35}~({\color{black}{$\downarrow$ 15.57}}) \\
          & SGE~\cite{li2019spatial} &       & 77.20  & 77.51  & \textbf{77.62} &       & 25.586  & 25.584  & \textbf{25.584} &       & 40.60  & 40.50  & \textbf{19.66}~({\color{black}{$\downarrow$ 20.84}}) \\
          & DIA~\cite{huang2020dianet}   &       & -     & 77.24  & \textbf{77.56} &       & -     & 28.385  & \textbf{28.385} &       & -     & 27.26  & \textbf{16.58}~({\color{black}{$\downarrow$ 10.68}}) \\
    \bottomrule
    \end{tabular}%
    \end{adjustbox}
  \label{tab:all_results}%
\end{table*}%

\begin{table*}[htbp]
  \caption{Comparison of the accuracy and relative inference time increment of the searched network for different methods.}
\vspace{-0.2cm}
  \centering
  \begin{adjustbox}{width=0.7\textwidth,center}
    \begin{tabular}{cccccc}
    \toprule
    Method & Acc.  & Time Increment (\%) & Method & Acc.  & Time Increment (\%) \\
    \midrule
    Share-EAN & \textbf{76.93}  & 18.81~({\color{black}{$\downarrow$22.85}}) & HSP~(101010..) & 75.02 &  21.29~({\color{black}{$\downarrow$20.37}}) \\
    DARTS & 75.41 & 28.02~({\color{black}{$\downarrow$13.64}}) & HSP~(010101..) & 74.87 & 21.29~({\color{black}{$\downarrow$20.37}}) \\
    GA    & 76.09 & 20.87~({\color{black}{$\downarrow$20.79}}) & HSP~(100100..) & 75.29 & 14.50~({\color{black}{$\downarrow$27.16}}) \\
     ENAS    & 76.08 & 28.05~({\color{black}{$\downarrow$13.61}}) & HSP~(010010..) & 74.01 & 14.50~({\color{black}{$\downarrow$27.16}})\\
    
    \bottomrule
    \end{tabular}%
  \end{adjustbox}

  \label{tab:searching}%
\end{table*}%

\subsection{Comprehensive Comparison with Searching Methods}
\label{sec:comp}
In this part, we compare our method (EAN) with heuristic selection policy (HSP), Genetic Algorithm (GA)~\cite{Vidnerov2020Multi}, ENAS~\cite{pham2018efficient} and DARTS~\cite{liu2018darts} for the Full-Share network. HSP is a heuristic policy that makes SAM connection every $N$ layers. For example, when $N=2$, the schemes can be $10101\cdots$ or $01010\cdots$.  Table~\ref{tab:searching} displays the experiments conducted on CIFAR100 with ResNet164 and SE module for different searching methods. From Table~\ref{tab:searching}, our method achieve better results of the Full-Share network compared with other methods, which is consistent with the results of the Full-SA network in Section~\ref{sec:fullsa}. Besides, the heuristic design of the connection scheme like HSP does not give a scheme for good accuracy, and hence it is necessary for a careful search algorithm as mentioned in \ref{sec:which_block}. 

The controller of ENAS tends to converge to some periodic-alike schemes at a fast speed. In this case, it will conduct much less exploration of the potential efficient structures. 
We show the list of connection schemes by ENAS~(an example) in Table~\ref{tab:rnn}. 
The majority of the schemes searched by ENAS are ``111...111''~(Full-Share network) or ``000...000''~(Original network), which shows that it can not strike the balance between the performance and inference time.  In Table~\ref{tab:ean_vs_enas}, the minority of the periodic-alike schemes searched by ENAS are shown, e.g., ``001'' in ENAS~(a). Such schemes may result from the input mode of ENAS, 
\textit{i.e.}, for a connection scheme $\mathbf{a} = (a_1,a_2,...,a_m)$, the value of component $a_l$ depends on $a_{l-1},a_{l-2},...,a_1$. This strong sequential correlations let the sequential information dominate in the RNN controller instead of the policy rewards. Compared with the periodic-alike connection schemes searched by ENAS, Share-EAN demonstrates better performance.

Besides, our experiment indicates that ENAS explores a much smaller number of candidate schemes. We quantify the convergence of the controller using $\bar{\mathbf{p}}=\frac{1}{m}\sum_{i=1}^m\hat{p}_\theta^i$, which is the mean of the probability $\hat{\mathbf{p}}$ associated with the scheme. When $\bar{\mathbf{p}}$ is close to 1, the controller tends to generate a deterministic scheme. Fig.~\ref{fig:enas_vs_ean} and Table \ref{tab:rnn} show the curve of $\bar{\mathbf{p}}$ with the growth of searching iterations, where $\bar{\mathbf{p}}$ of ENAS shows the significant tendency for convergence in 20 iterations and converges very fast within 100 iterations. Generally speaking, methods in NAS~\cite{pham2018efficient,zoph2016neural} require hundreds or thousands of iterations for convergence. 

 \begin{table*}[htbp]
  \caption{The connection schemes searched by ENAS~\cite{pham2018efficient} or our method. The experiment is conducted on CIFAR100 with SE module and ResNet164 backbone.}
  \centering
  \begin{adjustbox}{width=0.8\textwidth,center}
    \begin{tabular}{ccccc}
    \toprule
    Method & Stage1 & Stage2 & Stage3 & Test Accuracy (\%) \\
    \midrule
    ENAS (a) & 001001001001001001 & 001001001001001001 & 001001001001001001 & 75.80 \\
    ENAS (b) & 100100101100100100 & 101101100100101101 & 100100101101100100 & 75.11 \\
    ENAS (c) & 110110110110110110 & 110110110110110110 & 110110110110110110 & 76.08 \\
    \midrule
    Our method (a) & 001100100101110101 & 001100000111001111 & 101100000111110001 & \textbf{76.93} \\
    Our method (b) & 001100000001010111 & 011100001000010111 & 101000100110000000 & \textbf{76.71} \\
    \bottomrule
    \end{tabular}%
    \end{adjustbox}

  \label{tab:ean_vs_enas}%
\end{table*}%

 \begin{figure}
    \centering
    \includegraphics[width=0.9\linewidth]{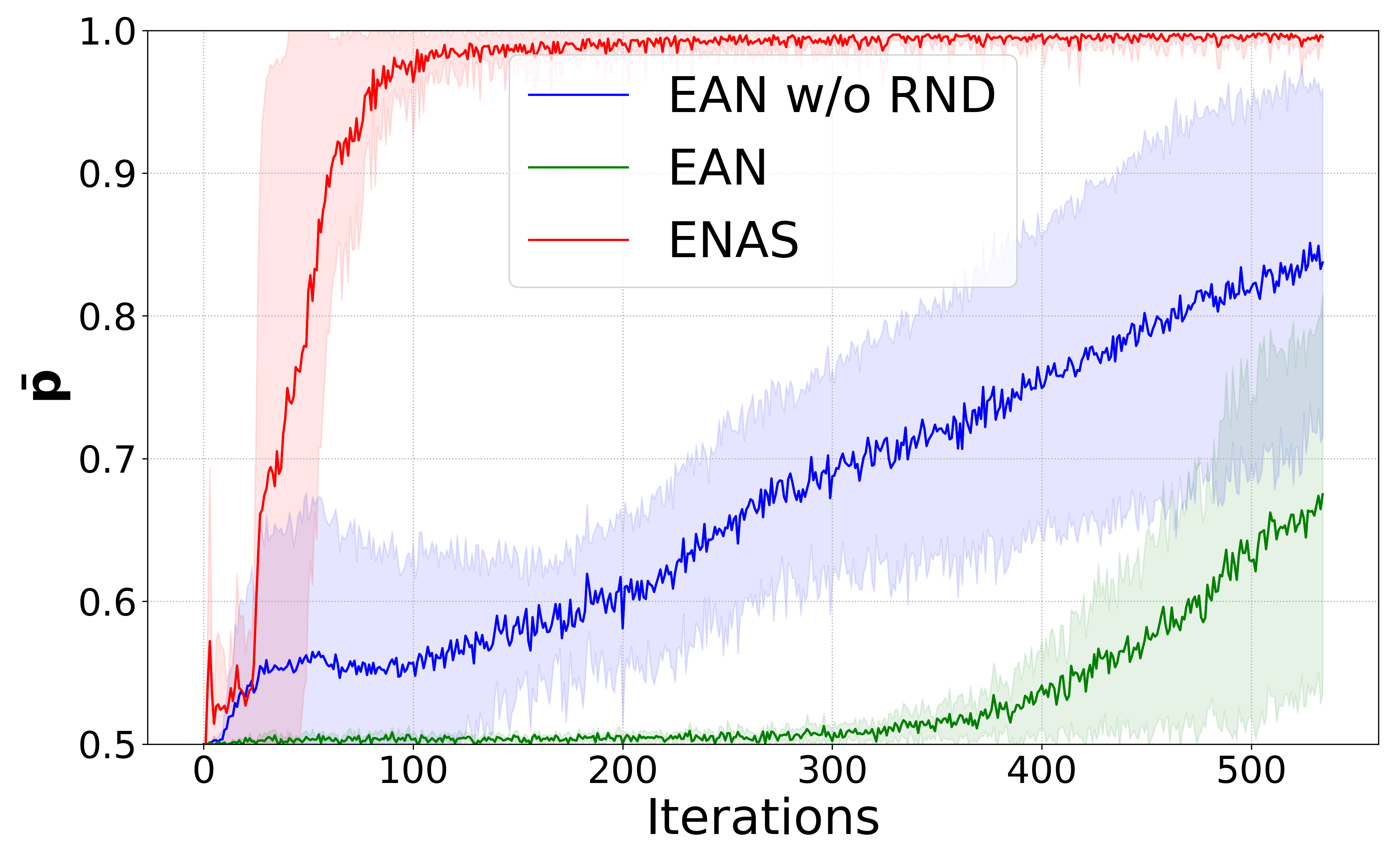}
    \caption{Comparison of the convergence speed between ENAS and EAN. The controller tends to generate a deterministic scheme when $\bar{\mathbf{p}}$ is close to 1.}
    \label{fig:enas_vs_ean}
\end{figure}
    
\begin{table}
  \caption{The connection scheme searched by ENAS. $\mathbf{\bar{p}}$ is the average of the probability associated with the scheme. The controller tends to generate a deterministic scheme if $\bar{\mathbf{p}}$ is close to 1. The experiment is conducted on CIFAR100 with ResNet164 and SE modules.}
  \centering
  \begin{adjustbox}{width=0.5\textwidth,center}
    \begin{tabular}{cccc}
    \toprule
    \textbf{Iteration} & \textbf{Connection Scheme} & \textbf{Sparse} & $\mathbf{\bar{p}}$ \\
    \midrule
    0     & 000110000001111101110010000001000110111110110110010011 & 0.52  & 0.50  \\
    5     & 100100111101010011110001110101011111011100110001000011 & 0.44  & 0.51  \\
    10    & 100001001011110110001000110011011101110111110111000011 & 0.44  & 0.50  \\
    15    & 111001000110011110111000111001011011111011011110111001 & 0.37  & 0.57  \\
    20    & 111111001111111111000111001111001011101100111111110111 & 0.26  & 0.67  \\
    25    & 101111111111111100111111110000010001101111111100111111 & 0.26  & 0.64  \\
    30    & 011110011111111111111110001111111111111001111111101111 & 0.17  & 0.85  \\
    35    & 111111110001111111111011111111111111111111111111111111 & 0.07  & 0.91  \\
    40    & 101111111111111111111111111111111111111111111111111111 & 0.02  & 0.96  \\
    45    & 111111111111111110111111111111111111111111111111111111 & 0.02  & 0.98  \\
    50    & 011111111111111111111000111111111111111111111111111111 & 0.07  & 0.98  \\
    55    & 111111111110011111111111111111111111111111111111111111 & 0.04  & 0.98  \\
    60    & 111111111111111111111111111111111111111111111111111111 & 0.00  & 0.98  \\
    65    & 111111111111111111111111111111111111111111111111111111 & 0.00  & 0.99  \\
    70    & 111111111111111110011111111111111111111111111111111111 & 0.04  & 0.96  \\
    75    & 011111111111111111111111111111111111111111111111111111 & 0.02  & 0.99  \\
    80    & 111111111111111111111111111111111111111111111111111111 & 0.00  & 1.00  \\
    85    & 111111111111111111111111111111111111111111111111111111 & 0.00  & 1.00  \\
    90    & 111111111111111111111111111111111111111111111111111111 & 0.00  & 1.00  \\
    95    & 111111111111111111111111111111111111111111111111111111 & 0.00  & 0.98  \\
    100   & 111111111111111111111111111111111111111111111111111111 & 0.00  & 0.99  \\
    105   & 111111111111111111111111111111111111111111111111111111 & 0.00  & 1.00  \\
    110   & 111111111111111111111111111111111111111111111111111111 & 0.00  & 1.00  \\
    115   & 111111111111111111111111111111111111111111111111111111 & 0.00  & 1.00  \\
    120   & 111111111111111111111111111111111111111111111111111111 & 0.00  & 1.00  \\
    125   & 111111111111111111111111111111111111111111111111111111 & 0.00  & 1.00  \\
    130   & 111111111111111111111111111111111111111111111111111111 & 0.00  & 1.00  \\
    135   & 111111111111111111111111111111111111111111111111111111 & 0.00  & 0.99  \\
    \bottomrule
    \end{tabular}%
    \end{adjustbox}

  \label{tab:rnn}%
\end{table}%

\section{Analysis}
\label{sec:analysis}
In this section, we demonstrate that the found self-attention subnetwork has the capacity of capturing the discriminative features as the full network and transferring to the downstream tasks. Besides, we compare the training time and the searching time to show that our search time is acceptable. 





\subsection{Capturing Discriminative Features}
To study the ability of Share-EAN in capturing and exploiting features of a given target, we apply Grad-CAM~\cite{selvaraju2017grad} to compare the regions where different models localize with respect to their target prediction. Grad-CAM is a technique to generate the heatmap highlighting network attention by the gradient related to the given target. Fig.~\ref{fig:cam} shows the visualization results and the softmax scores for the target with original ResNet50, Full-Share, and Share-EAN on the validation set of ImageNet2012. SE module is used in this part. The red region indicates an essential place for a network to obtain a target score while the blue region is the opposite. The results show that Share-EAN can extract similar features as Full-Share, and in some cases, Share-EAN can even capture much more details of the target associating with higher confidence for its prediction. This implies that the searched connection scheme may have a more vital ability to emphasize the more discriminative features for each class than the two baselines ~(original ResNet and Full-Share). Therefore it is reasonable to bring additional improvement on the final classification performance with Share-EAN in that the discrimination is crucial for the classification task, which is also validated from ImageNet test results in Table~\ref{tab:all_results}.

\subsection{Comparison of Training Time and Search Time}
\label{sec:search}
For NAS, we not only need to care about whether the search method can find a neural network structure that satisfies certain conditions but also need to focus on its computational cost. Taking the experiments~($\beta = 0.5$) in Table~\ref{tab:dabiao} as example, we measure the time of these experiments in Table~\ref{tab:time}. The time for training ResNet38 and ResNet164 from scratch is 1.61h, 4.46h on a single GPU 1080Ti while our RL-based method requires only 25.4\%, 21.3\% of the train time for searching from the supernet, respectively. The search time is acceptable and worthwhile as the found ticket may be applied to the downstream tasks that will be discussed in the next part.

\begin{table}[htbp]
  \centering
  \caption{The Training time of our baseline method. ``Train Time" denotes the time of training a neural network from scratch. ``Search Time" denotes the time of our RL search. ``Search/Train'' represents the ratio of search time to the train time. All experiments are conducted on a single GPU.}
    \begin{tabular}{ccccc}
    \toprule
    GPU  & Model & Train Time    & Search Time    & Search/Train \\
    \midrule
    1080Ti   & ResNet38 & 1.61 hrs & 0.41 hrs & 25.40\% \\
    1080Ti   & ResNet164 & 4.46 hrs & 0.94 hrs & 21.30\% \\
    \bottomrule
    \end{tabular}%
  \label{tab:time}%
\end{table}%

\subsection{Transferring Connection Schemes}
\label{sec:transfer}

In this part, we study the transferability of the network architecture searched by our baseline method. Specifically, we conduct experiments on transferring the optimal architecture from image classification to crowd counting task~\cite{zhang2016single,cao2018scale,li2018csrnet,hossain2019crowd} and segmentation~\cite{Everingham10}.
The model trained with classification is typically used to initialize the model for downstream tasks~\cite{ma2020mri,fan2022novel}. 
If the found network have the transferability, we will have the advantages as follows: (I) We do not need to spend extra time searching for a ticket for the new task;
(II) The model for the new task inherits a good representation ability of the pretrained model; (III) The model with fewer SAMs has less forward and back-propagation cost compared with Full-SAM. In this case, the computational cost of our RL-based search shown in Table~\ref{tab:time} is acceptable.


\textbf{Crowd counting.} Crowd counting aims to estimate the density map and predict the total number of people for a given image, whose efficiency is also crucial for many real-world applications, \textit{e.g.}, video surveillance and crowd analysis. However, most state-of-the-art works still rely on the heavy pre-trained backbone networks~\cite{liu2020efficient} for obtaining satisfactory performance on such dense regression problems. The experiments show that the Share-EAN trained on ImageNet serves as an efficient backbone network and can extract the representative features for crowd counting. 
We evaluate the transferring performance on the commonly-used Shanghai Tech dataset~\cite{zhang2016single}, which includes two parts. Shanghai Tech part A~(SHHA) has 482 images with 241,677 people counting, and Shanghai Tech part B~(SHHB) contains 716 images with 88,488 people counting. Following the previous works, SHHA and SHHB are split into train/validation/test set with 270/30/182 and 360/40/316 images, respectively. The performance on the test set is reported using the standard Mean Square Error~(MSE) and Mean Absolute Error~(MAE), as shown in Table~\ref{tab:crowdcountin}. 
Our Share-EANs outperform the baseline (Full-SA and Full-Share) while reducing the inference time increment by up to 28\% compared with the baseline.
\begin{table*}[htbp]
  \caption{Comparison of performance between different pre-trained models on crowd counting. Smaller MAE/MSE is better. }
  \centering
  \begin{adjustbox}{width=0.8\textwidth,center}
    \begin{tabular}{clcccccccl}
    \toprule
    \multirow{2}[4]{*}{Dataset} & \multicolumn{1}{c}{\multirow{2}[4]{*}{Model}} &       & \multicolumn{3}{c}{MAE/MSE~($\downarrow$)} &       & \multicolumn{3}{c}{Relative Inference Time Increment (\%)} \\
\cmidrule{4-6}\cmidrule{8-10}          &       &       & Full-SA & Full-Share & Share-EAN   &       & Full-SA & Full-Share & Share-EAN \\
    \midrule
    \multirow{2}[2]{*}{SHHB} & SE~\cite{hu2018squeeze} &       & 9.5/15.93 & 8.9/14.6 & 8.6/14.7 &       & 19.19  & 19.19  & \textbf{6.16}~({\color{black}{$\downarrow$ 13.03}}) \\
          & DIA~\cite{huang2020dianet} &       & -     & 9.1/14.9 & 8.2/13.9 &       & -     & 16.93  & \textbf{8.71}~({\color{black}{$\downarrow$ 8.22}}) \\
    \midrule
    \multirow{3}[2]{*}{SHHA} & SGE~\cite{li2019spatial} &       & 93.9/144.5 & 91.6/143.1 & 88.4/140.0 &       & 58.98  & 58.85  & \textbf{30.55}~({\color{black}{$\downarrow$ 28.30}}) \\
          & SE~\cite{hu2018squeeze} &       & 89.9/140.2 & 89.9/140.2 & 79.4/127.7 &       & 49.50  & 49.00  & \textbf{21.07}~({\color{black}{$\downarrow$ 27.93}}) \\
          & DIA~\cite{huang2020dianet} &       & -     & 92.5/130.4 & 90.3/141.6 &       & -     & 51.75  & \textbf{29.43}~({\color{black}{$\downarrow$ 22.32}}) \\
    \bottomrule
    \end{tabular}%
    \end{adjustbox}
\label{tab:crowdcountin}%
\end{table*}%
\begin{figure}[t]
    \centering
    \includegraphics[width=\linewidth]{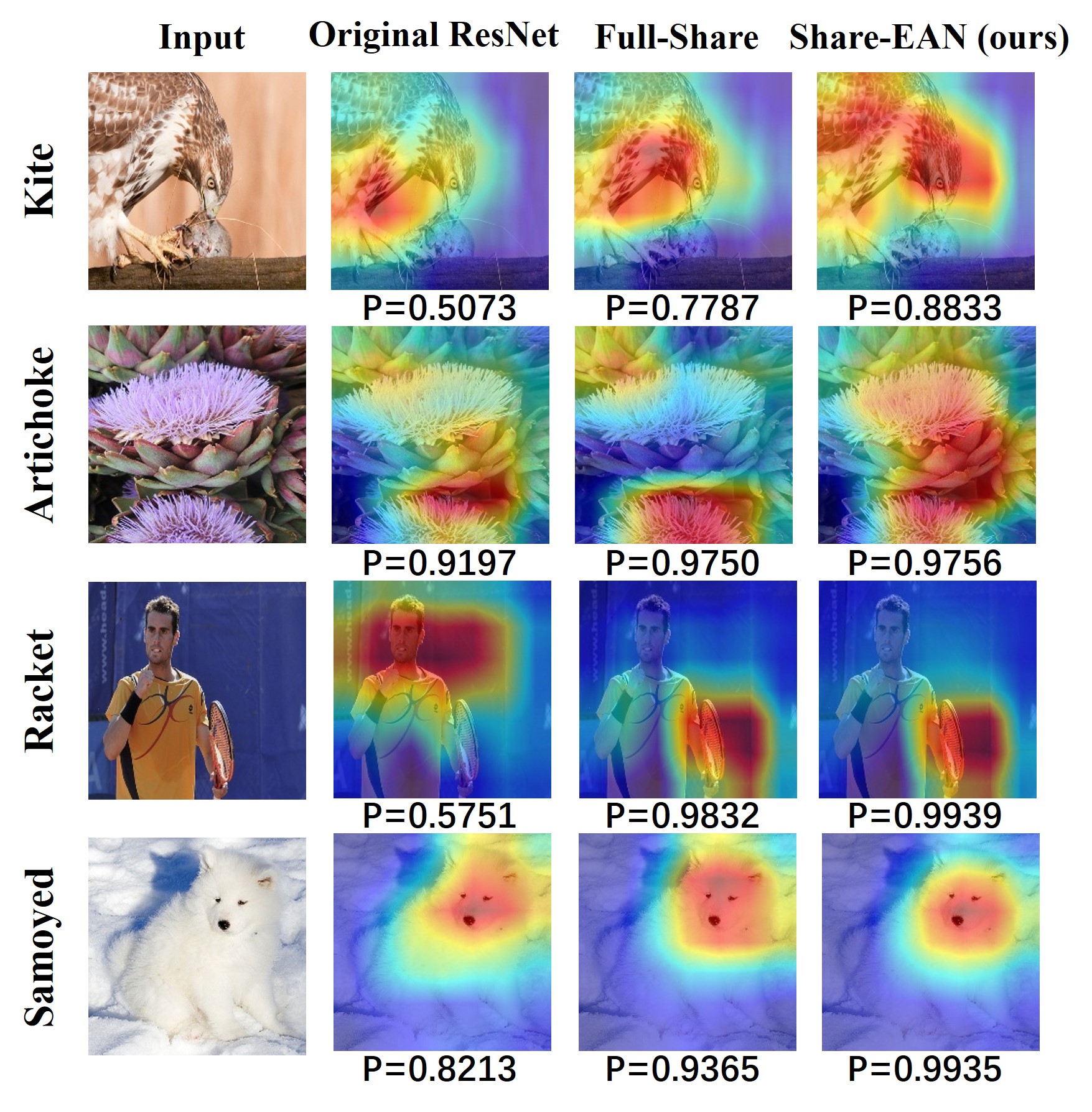}
    \caption{Grad-CAM visualization of different networks. The red region indicates an essential place for a network to obtain a target score ($\mathbf{P}$) while the blue one is the opposite.}
    \label{fig:cam}
\end{figure}

\textbf{Semantic segmentation.}
We verify the transferability of the Share-EAN on semantic segmentation task in Pascal VOC 2012~\cite{everingham2015pascal} dataset. Table~\ref{tab:semseg} shows the performance comparison of the backbone with different types of SAM, e.g., DIA and SE. Again, our results indicate that the Share-EAN can maintain the performance of the Full-Share network and significantly reduce the time increment compared with the Full-Share network, which shows Share-EAN has the capacity of transferring to segmentation.
\begin{table}[htbp]
  \caption{Performance and relative inference time increment comparison on Pascal VOC 2012 validation set.}
  \centering
    \begin{adjustbox}{width=0.45\textwidth,center}
    \begin{tabular}{ccc}
    \toprule
    Model & mIoU/mAcc/allAcc~(\%) & Time Increment~(\%)\\
    \midrule
    Original ResNet & 69.39 / 78.87 / 92.97&- \\
    Full-Share-SE & 73.03 / 82.13 / 93.74&48.16 \\
    Share-EAN-SE & 73.68 / 83.08 / 93.79& \textbf{16.43}~({\color{black}{$\downarrow$ 31.73}}) \\
    Full-Share-DIA & 74.02 / 83.11 / 93.92&64.86 \\
    Share-EAN-DIA & 73.91 / 82.93 / 93.92 & \textbf{7.68}~({\color{black}{$\downarrow$ 57.18}})\\
    \bottomrule
    \end{tabular}%
    \end{adjustbox}

  \label{tab:semseg}%
  \vspace{-0.3cm}
\end{table}%


\section{Related Works}
\label{sec:related}
\textbf{Lottery Tickets Hypothesis~(LTH).} The Lottery Ticket Hypothesis~\cite{frankle2018lottery} conjectures that: every random initialized and dense NN contains a subnetwork that can be trained in isolation with the original initialization to achieve comparable performance to the original NN. 
This original LTH attracts many researchers to rethink the training of the overparameterized model, leading to variants of LTH under different learning paradigms and  machine learning fields~\cite{Ramanujan_2020_CVPR,NEURIPS2020_b6af2c97,chen2021earlybert}

On the other hand, Malach et al.~\cite{pmlr-v119-malach20a} try to prove a LTH variants~\cite{Ramanujan_2020_CVPR} by showing that given a target NN of depth $l$ and width $d$, any random initialized network with depth $2l$ and width $O\left(d^{5} l^{2} / \epsilon^{2}\right)$ contains subnetworks that can approximate the target network with $\epsilon$ error. Following works~\cite{NEURIPS2020_1e949147,pensia2020optimal} further reduce the width to $O(d \log (d l / \epsilon))$ for the random initialized network. 

\noindent\textbf{Neural Architecture Search~(NAS).} Designing a satisfactory neural architecture automatically, also known as neural architecture search, is of significant interest for academics and industries. Such a problem may always be formulated as searching for the optimal combination of different network granularities. The early NAS works require expensive computational costs for scratch-training a massive number of architecture candidates~\cite{zoph2016neural, zoph2018learning}. To alleviate the searching cost, the recent advances of one-shot approaches for NAS bring up the concept of supernet based on the weight-sharing heuristic. Supernet serves as the search space embodiment of the candidate architectures, and it is trained by optimizing different sub-networks from the sampling paths, \textit{e.g.}, SPOS~\cite{guo2020single}, GreedyNAS~\cite{you2020greedynas}.



\noindent\textbf{Self-Attention Mechanism.} The self-attention mechanism is widely used in CNNs for computer vision~\cite{hu2018squeeze,wang2018non,huang2020dianet,cao2019gcnet,li2019spatial,liang2020instance}. 
Squeeze-Excitation~(SE) module~\cite{hu2018squeeze} leverages global average pooling to extract the channel-wise statistics and learns the non-mutually-exclusive relationship between channels. Spatial Group-wise Enhance~(SGE) module~\cite{li2019spatial} learns to recalibrate features by saliency factors learned from different groups of the feature maps. Dense-Implicit-Attention~(DIA) module~\cite{huang2020dianet} captures the layer-wise feature interrelation with a recurrent neural network~(RNN).

\section{Conclusion}
Lottery Ticket Hypothesis for Self-attention Networks is proposed in this paper, which is supported by numerical and theoretical evidence. Then, to find a ticket, we propose an effective connection scheme searching method based on policy gradient as a baseline to find a ticket. The self-attention network found by our method can maintain accuracy, reduce parameters and accelerate the inference speed. Besides, we illustrate that the found network has the capacity of capturing the informative features and transferring to other computer vision tasks.


{\appendix[Proof of Theorem 1]

\textbf{Theorem 1}. A 1-hidden-layer feed-forward NN is defined as $NN(x)=W^{2}\sigma(W^1x)$, where input $x\in \mathds{R}^{d\times1}$ with $\|x\|_2\leq 1$, $W^1$ is of size $m\times d$, $W^2$ is of size $1\times m$ and $\sigma$ is ReLU activation. $W^1_{i,j}$ is initialized i.i.d. by the Gaussian distribution $\mathcal{N}(0, {(\frac{1}{\sqrt{m}})}^{2})$, and $W^2_{1,j}$ is initialized by the uniform distribution $Uniform\{1,-1\}$. Let $\mathcal{P}(d-1,\epsilon)$ be $\mathbb{P}\{\chi^2(d-1)\geq \epsilon^2\}$, where $\chi^2(d-1)$ is a chi-square variable with $d-1$ degree of freedom. Then for any $\epsilon, \delta>0$, when the number of hidden neurons $m>\frac{\ln(\delta)}{\ln(\mathcal{P}(d-1,\epsilon))}$, then there exists the row $j$ of $W^1$ such that when we set the row $j$ to be zero, i.e., $B_jW^1$ with $B_j=diag\{1,\cdots,1,0,1,\cdots,1\}$ (the $j ^{\rm th}$ entry is 0), we have 
\begin{align*}
    \|W^{2}\sigma(W^1x)-W^{2}\sigma(B_jW^1x)\|<\epsilon,
\end{align*}
with probability higher than $1-\delta$.

\begin{proof}
Since $W^2_{1,j}\sim Uniform\{-1,1\}$, we have $\|W_2\|_2=\sqrt{m}$. We denote the row $s$ of $W^1$ as $W^1_{s:}$. Let's consider the following probability,
\begin{align*}
    \mathbb{P}\{\nexists s: \|W^1_{s:}\|_2< \frac{\epsilon}{\sqrt{m}}\}&=\mathbb{P}\{\cup_{s=1}^m \|W^1_{s:}\|_2\geq \frac{\epsilon}{\sqrt{m}}\}
    \\&=\prod_{s=1}^m\mathbb{P}\{\|W^1_{s:}\|_2\geq \frac{\epsilon}{\sqrt{m}}\}
    \\&=\prod_{s=1}^m\mathbb{P}\{\frac{1}{m}\chi^2(d-1)\geq \frac{\epsilon^2}{m}\}
    \\&=\mathcal{P}(d-1,\epsilon)^m.
\end{align*}
Let $\mathcal{P}(d-1,\epsilon)^m<\epsilon$, and then we have $m>\frac{\ln(\delta)}{\ln(\mathcal{P}(d-1,\epsilon))}$. Therefore, when $m>\frac{\ln(\delta)}{\ln(\mathcal{P}(d-1,\epsilon))}$, with probability greater than $1-\delta$, there exists $j$ such that $\|W^1_{j:}\|_2< \frac{\epsilon}{\sqrt{m}}$. 

Let $G=diag\{W^1x\geq 0\}$, where the $s$-th diagonal component of $G$ is 1 if $W^1_{s:}x\geq 0$ else 0. Then $\sigma(W^1x)=GW^1x$ and $\sigma(B_jW^1x)=GB_jW^1x$. Finally, we obtain
\begin{align*}
    &\|W^{2}\sigma(W^1x)-W^{2}\sigma(B_jW^1x)\|_2
    \\=&\|W^{2}GW^1x-W^{2}GB_jW^1x\|_2
    \\\leq &\|W^{2}\|_2\|G\|_2\|W^1-B_jW^1\|_2\|x\|_2
    \\\leq &\sqrt{m}\times 1\times \|W^1_{j:}\|_2\times 1\leq \epsilon,
\end{align*}
with probability greater than $1-\delta$.
\end{proof}

 }
 
{\appendix[Proof of Theorem 2]

\textbf{Theorem 2}. Let $T(x)$ be a Lipschitz continuous and Lebesgue integrable function in $d$-dimensional compact set $K$. And $R_{\text{full}}(x,\theta_{\text{full}})$ is a ReLU ResNet structure with parameters $\theta_{\text{full}}$. Let $\epsilon_0>0$ be a fixed constant. Suppose that there exists $\theta_{\text{full}}^0$ such that
$
    \int_{K}|R_{\text{full}}(x,\theta_{\text{full}}^0) - T|dx \leq \frac{\epsilon_0}{2}.
$
If the width of each layer in $R_{\text{full}}(x,\theta_{\text{full}})$ is larger than $d$ and the depth of $R_{\text{full}}(x,\theta_{\text{full}})$ is larger than a constant that depends on $\epsilon_0$, then for any $\epsilon \in (\epsilon_0, 1)$, there exists a subnetwork $R_\text{sub}(x)$ of $R_{\text{full}}(x,\theta_{\text{full}})$ such that 
\begin{equation}
    \int_{K}|R_{\text{full}}(x,\theta_{\text{full}}^0) - R_\text{sub}(x)|dx \leq \epsilon.
\end{equation}

\begin{lemma}\cite{lin2018resnet} For any $d\in \mathbb{N}$, the family of ResNet with one-neuron hidden layers and ReLU activation function can universally approximate any Lebesgue integrable function $f$. In other words, for any $\epsilon > 0$, there is a ResNet $R$ with finitely many layers and width not larger than $d$ such that 
\begin{equation}
    \int_{\mathds{R}^d}|f(x) - R(x)|dx \leq \epsilon.
\end{equation}
\end{lemma}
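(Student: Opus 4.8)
The plan is to reduce everything to Lemma 1 via the triangle inequality, using $T$ as the bridge between the full network and the subnetwork we construct. Since the hypothesis already controls $\int_K |R_{\text{full}}(x,\theta_{\text{full}}^0) - T|\,dx \le \epsilon_0/2$, it suffices to exhibit a subnetwork $R_{\text{sub}}$ of $R_{\text{full}}$ that approximates $T$ in $L^1(K)$ to accuracy $\epsilon_0/2$; then $\int_K |R_{\text{full}}(x,\theta_{\text{full}}^0) - R_{\text{sub}}|\,dx \le \epsilon_0/2 + \epsilon_0/2 = \epsilon_0 < \epsilon$ for every $\epsilon \in (\epsilon_0,1)$, which is exactly the claim. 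Note that fixing the target accuracy at $\epsilon_0/2$ (rather than $\epsilon - \epsilon_0/2$) makes the required subnetwork independent of $\epsilon$, so a single depth threshold works uniformly across all admissible $\epsilon$.

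First I would produce the narrow approximant. Because Lemma 1 is stated over $\mathbb{R}^d$ while $T$ lives on the compact set $K$, I would extend $T$ to $\tilde T := T\cdot \mathbf{1}_K$, which is Lebesgue integrable on $\mathbb{R}^d$ since $T$ is bounded and measurable on the finite-measure set $K$. Applying Lemma 1 to $\tilde T$ with tolerance $\epsilon_0/2$ yields a ResNet $R^*$ with one-neuron hidden layers, width at most $d$, and a finite number of layers $C(\epsilon_0)$, satisfying $\int_{\mathbb{R}^d}|\tilde T - R^*|\,dx \le \epsilon_0/2$, hence $\int_K |T - R^*|\,dx \le \epsilon_0/2$. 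This $C(\epsilon_0)$ is precisely the constant depending on $\epsilon_0$ that lower-bounds the depth of $R_{\text{full}}$ in the hypothesis.

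The core of the argument, and the step I expect to be the main obstacle, is to realize $R^*$ as a genuine pruned subnetwork of $R_{\text{full}}$. Here I would exploit two structural features of ResNets. For the width, since each layer of $R_{\text{full}}$ has width larger than $d$ while $R^*$ has width at most $d$, I embed $x$ into $d$ of the skip-connection coordinates, prune every hidden neuron except the single one realizing $R^*$'s residual block in each active layer, and prune all weights that would write into the remaining $(\text{width}-d)$ coordinates so they stay at zero throughout. For the depth, whenever $R_{\text{full}}$ is deeper than $C(\epsilon_0)$, I prune the surplus residual blocks to zero; in a ResNet a zero residual block computes $x_{\ell+1}=x_\ell$, i.e.\ the identity, so the surplus layers are harmless. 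After a readout that selects the output coordinate, the pruned network computes $R^*$ exactly, so $R_{\text{sub}}=R^*$ on $K$.

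The delicate points to pin down are the meaning of \emph{subnetwork}: I read it in the structural/expressive sense, where a subnetwork is obtained by removing neurons and connections of the architecture and then carries the weights it inherits from the embedding of $R^*$, rather than the fixed-random-weight sense of Malach et al. I also need to verify that the one-neuron residual blocks of $R^*$ can be matched slot-for-slot to blocks of $R_{\text{full}}$ (a block-dimension bookkeeping argument that the width-larger-than-$d$ hypothesis guarantees) and that the input and readout maps of $R_{\text{full}}$ admit the required restriction to $d$ coordinates. Once the embedding is established, the triangle inequality from the first paragraph closes the proof.
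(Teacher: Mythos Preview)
Your proposal does not address the stated lemma at all. The statement you were asked to prove is the universal approximation result for narrow ResNets (width at most $d$, one-neuron hidden layers) in $L^1(\mathbb{R}^d)$. The paper does not prove this lemma; it simply cites \cite{lin2018resnet}. Your write-up, by contrast, \emph{assumes} this lemma and uses it as a black box to argue the existence of a subnetwork $R_{\text{sub}}$ of $R_{\text{full}}$ approximating $R_{\text{full}}(\cdot,\theta_{\text{full}}^0)$ on $K$---that is Theorem~2, not the lemma. Nothing in your proposal establishes why a narrow ResNet can approximate an arbitrary Lebesgue-integrable $f$.

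If your intent was actually Theorem~2, then your outline is essentially the paper's own argument: bridge through $T$ with the triangle inequality, invoke the narrow-ResNet approximation to get $R^*$, and pad the depth by zeroing surplus residual blocks (the paper's ``extension strategy,'' Lemma~2). Two differences are worth noting. First, the paper makes the depth constant explicit by invoking a quantitative version of the approximation result (Lemma~3), obtaining depth $O((L/\epsilon_0)^d)$ from the Lipschitz modulus, whereas you only appeal to finiteness of the depth from Lemma~1; the paper's route gives the ``constant depending on $\epsilon_0$'' a concrete form. Second, the paper's triangle-inequality bound is $\epsilon/2 + \epsilon_0/2 \le \epsilon$ (it builds $R_{\text{short}}$ to accuracy $\epsilon/2$), while you fix accuracy $\epsilon_0/2$ to get $\epsilon_0 < \epsilon$; both work, but yours yields a single $R_{\text{sub}}$ valid for all $\epsilon\in(\epsilon_0,1)$, which is a slight improvement in presentation.
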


\begin{proof}
See \cite{lin2018resnet}.
\end{proof}

We use the notation ${\rm dep}(\cdot)$ to denote the depth of a network. 

\begin{lemma}\label{lema:ext}(Extension strategy) Let $T$ be a Lebesgue integrable $d$-dimensional function. For an $\epsilon > 0$, a ReLU ResNet $g(x,\theta_g^0)$ with parameters $\theta_g^0$ satisfies $\int_{\mathds{R}^{d}}\left|g\left(x, \theta_{g}^{0}\right)-T\right| d x \leq \epsilon$, then there exists $f(x, \theta_{f}^{0})$ with ${\rm dep}(f)>{\rm dep}(g)$ such that
\begin{equation}
    \int_{\mathds{R}^{d}}\left|f\left(x, \theta_{f}^{0}\right)-T\right| d x \leq \epsilon.
\end{equation}
Here $f(x, \theta_{f}^0)$ is obtained by adding some layers to the last layer of $g(x, \theta_{g}^0)$.
\end{lemma}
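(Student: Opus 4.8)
The plan is to deepen $g$ while leaving the function it computes (and hence its integral against $T$) untouched, exploiting the fact that in a residual architecture the identity map is essentially free. First I would recall that a ReLU ResNet of the type in Lemma~1 is a composition of residual blocks of the form $y\mapsto y + V\sigma(Uy+b)$ with a one-neuron hidden layer. The crucial observation is that such a block realizes the identity map $y\mapsto y$ whenever its outer weight $V$ is set to zero, irrespective of $U$ and $b$: the skip connection supplies the identity and the ReLU branch contributes nothing.

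Building on this, I would define $f(x,\theta_f^0)$ by appending one residual block to the last layer of $g(x,\theta_g^0)$, choosing the appended block's outer weight to be zero, and letting $\theta_f^0$ collect $\theta_g^0$ together with the (inconsequential) parameters of the appended block. Because the appended block is exactly the identity on the residual stream and the output readout is taken after it, $f(x,\theta_f^0)=g(x,\theta_g^0)$ for every $x\in\mathds{R}^d$. Consequently
\begin{equation*}
\int_{\mathds{R}^d}\left|f(x,\theta_f^0)-T\right|dx = \int_{\mathds{R}^d}\left|g(x,\theta_g^0)-T\right|dx \leq \epsilon,
\end{equation*}
so the approximation bound is inherited verbatim, with no new error incurred. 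At the same time ${\rm dep}(f)={\rm dep}(g)+1>{\rm dep}(g)$, and appending further identity blocks lets me reach any prescribed depth beyond ${\rm dep}(g)$, which is exactly what the downstream depth-matching in the proof of Theorem~2 will require. Since the appended blocks retain the one-neuron-hidden-layer structure, $f$ also stays within the narrow ResNet family.

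I expect the only delicate point to be confirming that the ReLU residual block represents the identity \emph{exactly} under the network convention in use. This is immediate for the standard block $y\mapsto y+V\sigma(Uy+b)$ via $V=0$; if instead the convention placed a ReLU on the merged output, one would realize the identity through the two-neuron gadget $y=\sigma(y)-\sigma(-y)$ applied coordinatewise, which still adds only bounded width and preserves the function exactly. Either way the extension is exact rather than approximate, so—unlike a plain feed-forward deepening, where inserting a ReLU layer cannot reproduce the identity on negative inputs—the residual structure makes the depth increase cost-free, and the bound $\epsilon$ carries over without any further estimate.
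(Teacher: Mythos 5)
Your proposal is correct and follows essentially the same route as the paper: append residual blocks whose added-branch parameters are zeroed out, so each new block is the identity via the skip connection, $f(x,\theta_f^0)=g(x,\theta_g^0)$ pointwise, and the integral bound carries over unchanged while the depth strictly increases. Your extra remark about handling a post-merge ReLU via $y=\sigma(y)-\sigma(-y)$ is a careful addition the paper does not spell out, but the core argument is identical.
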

 \begin{figure}[H]
    \centering
    \includegraphics[width=1\linewidth]{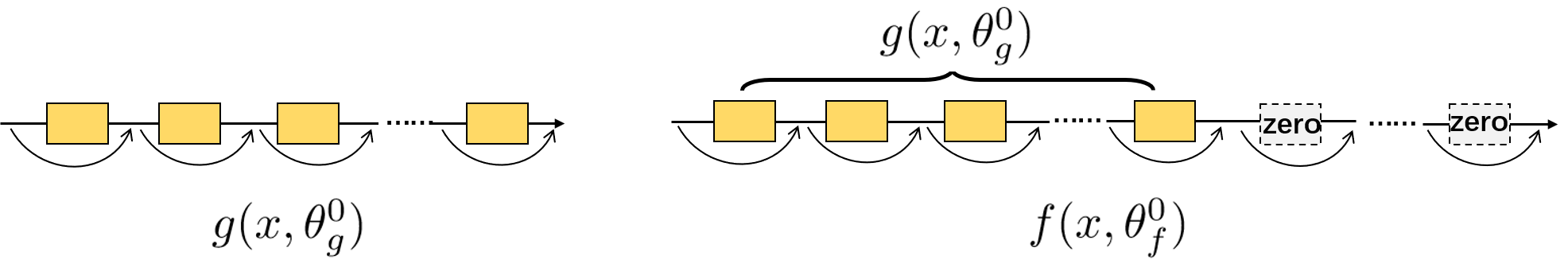}
    \caption{The structure $f(x, \theta_{f}^0)$ and $g(x, \theta_{g}^0)$.}
    \label{fig:fg}
\end{figure}
\begin{proof}
As shown in Fig.~~\ref{fig:fg}, we can expand $g$ by adding some skip connection layers to its last layer. And then we set all the values of the parameters of the extra layers to zeros. Through the skip connections, we have $g(x, \theta_{g}^0) = f(x, \theta_{f}^0)$ and $\text{dep}(g) < \text{dep}(f)$. Therefore,
\begin{equation}
    \int_{\mathds{R}^{d}}\left|f\left(x, \theta_{f}^{0}\right)-T\right| d x  = \int_{\mathds{R}^{d}}\left|g\left(x, \theta_{g}^{0}\right)-T\right| d x \leq \epsilon.
\end{equation}
\end{proof}

\begin{lemma} \label{lema:depth}Let $f$ be the Lebesgue integrable function defined on $d$-dimensional compact set $K\subset \mathds{R}^d$. $\forall \epsilon > 0$, according to Lemma 1, there is a ResNet $R(x)$ with finitely many layers and width not larger than $d$ such that $\int_{K}|f(x) - R(x)|dx \leq \epsilon$. Then the depth of $R(x)$ is $O(1/r^d)$, where $r$ satisfies $\omega_K(r) \leq \epsilon/\text{Vol}(K)$ with $\omega_K(r)$ defined by
\begin{equation}
    \omega_K(r) = \max_{x,y\in K,||x-y||\leq r}|f(x) - f(y)|.
\end{equation}

\end{lemma}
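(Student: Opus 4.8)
The plan is to make explicit the grid-based construction underlying Lemma~1 (the ResNet universal approximation result of \cite{lin2018resnet}) and to count how its depth scales with the mesh size $r$. First I would fix a uniform partition of a bounding box $[-M,M]^d \supseteq K$ into cubes of diameter at most $r$, retaining only the cells that intersect $K$. On each retained cell $Q_i$ I replace $f$ by a single constant $c_i$, e.g. the value of $f$ at a representative point of $Q_i$, defining a piecewise-constant target $\bar f$. The first key estimate is the approximation error: for $x \in Q_i$ every point of $Q_i$ lies within distance $r$ of the representative point, so $|f(x)-c_i| \le \omega_K(r)$, and summing over cells gives
\begin{equation*}
\int_K |f(x)-\bar f(x)|\,dx \le \sum_i \omega_K(r)\,\text{Vol}(Q_i) = \omega_K(r)\,\text{Vol}(K) \le \epsilon,
\end{equation*}
by the hypothesis $\omega_K(r)\le \epsilon/\text{Vol}(K)$.

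The second, and main, step is to read off the depth from the construction of \cite{lin2018resnet}. There the ResNet with one-neuron hidden layers realizes such a grid function by processing the cells one at a time: each cell is handled by a fixed, constant number of residual layers (a few ReLU units assembling an indicator-type bump for $Q_i$ and adding its weighted contribution to the running output). Consequently the total depth is proportional to the number of retained cells. Since a box of side $2M$ contains $O(r^{-d})$ cubes of diameter at most $r$, and we keep only those meeting $K$, the number of cells is $O(\text{Vol}(K)/r^d)$; treating $\text{Vol}(K)$ and $M$ as fixed constants yields depth $O(1/r^d)$, which is the claim. The residual (non-depth-contributing) error incurred by the ResNet in approximating $\bar f$ itself can be made negligible and folded into $\epsilon$.

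I expect the main obstacle to be the depth–cell-count correspondence rather than the error bound. The error estimate is a one-line consequence of the modulus of continuity, but establishing that the depth of the \cite{lin2018resnet} construction grows \emph{linearly} in the number of grid cells — rather than, say, polynomially or with extra logarithmic factors — requires unpacking the explicit trellis architecture in that paper and verifying that each cell is processed with $O(1)$ additional layers, uniformly in $r$. A secondary technical point is the treatment of cells that only partially overlap $K$, together with the width constraint not larger than $d$, but these affect only the implicit constant in the $O(1/r^d)$ bound and not its order.
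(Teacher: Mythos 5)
Your reconstruction is consistent with the paper's treatment: the paper's entire proof of this lemma is the single line ``Refer to \cite{lin2018resnet},'' and your grid-partition argument --- piecewise-constant approximation with error $\omega_K(r)\,\text{Vol}(K)\le\epsilon$, plus $O(1)$ one-neuron residual layers per cell and $O(1/r^d)$ cells --- is precisely the construction in that reference that the paper delegates to. The only open point is the one you already flag yourself (verifying the linear depth--cell-count correspondence in the trellis architecture), and since the paper supplies no argument of its own, your sketch is if anything more explicit than the paper's.
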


\begin{proof}
Refer to \cite{lin2018resnet}.
\end{proof}

Now, we prove \textbf{Theorem 2} through the lemmas above. 
\begin{proof}
First, $T(x)$ is a Lipschitz continuous function, so
\begin{equation}
    |T(x) - T(y)| \leq L|x - y|,
    \label{lpxz}
\end{equation}
for $x,y \in K$, where $L$ is a constant. Then we have
\begin{align*}
\omega_K(r) &= \max_{x,y\in K,||x-y||\leq r}|T(x) - T(y)|	\\
&	\leq \max_{x,y\in K,||x-y||\leq r}L|x - y|\tag*{Since Eq.(\ref{lpxz})}	\\
&	\leq Lr.	
\end{align*}
Let $\omega_K(r) \leq Lr = \epsilon/\text{Vol}(K)$, and then we have
\begin{equation}
    r = \frac{\epsilon}{\text{Vol}(K)\cdot L}.
\end{equation}
When $r = \epsilon/(\text{Vol}(K)\cdot L)$, then for any $\epsilon \in (\epsilon_0,1)$,
\begin{equation}
    O(1/r^d) = O((\frac{L}{\epsilon})^d) < O((\frac{L}{\epsilon_0})^d)=C(\frac{L}{\epsilon_0})^d,
\end{equation}
where $C$ is a constant. 
Therefore, according to Lemma \ref{lema:depth},  $\forall \epsilon \in (\epsilon_0,1)$, there exist a ResNet $R_{\text{short}}(x)$ with width not greater than $d$ and the depth of at most $C(\frac{L}{\epsilon_0})^d$ such that $\int_{K}|T(x) - R_{\text{short}}(x)|dx \leq \epsilon/2$. When the depth of $R_{\text{full}}(x,\theta_{\text{full}})$ is greater than $C(\frac{L}{\epsilon_0})^d$, we can use Lemma \ref{lema:ext}~(extension strategy) to construct a function $R_{\text{long}}(x)$ such that 
\begin{equation}
    \text{dep}(R_{\text{long}}) = \text{dep}(R_{\text{full}}),
    \label{eq:depth}
\end{equation}
and the width of $R_{\text{long}}$ is not greater than $d$. Also, for any $x \in K$, $R_{\text{long}}(x) = R_{\text{short}}(x)$. So we have
\begin{equation}
        \int_{K}|T(x) - R_{\text{long}}(x)|dx = \int_{K}|T(x) - R_{\text{short}}(x)|dx\leq \epsilon/2.
    \label{eq:111}
\end{equation}

Then
\begin{align}
\int_{K}|R_{\text{full}}(x,\theta_{\text{full}}^0) - R_\text{long}(x)|dx &\leq \int_{K}|T(x) - R_\text{long}(x)|dx \\
&+ \int_{K}|R_{\text{full}}(x,\theta_{\text{full}}^0) - T(x)|dx	\\
&\leq 	\epsilon/2 + \epsilon_0/2	\leq \epsilon	\label{ieq:bound}
\end{align}

Note that $\text{dep}(R_{\text{long}}) = \text{dep}(R_{\text{full}})$. Also, $R_{\text{long}}$ is a ResNet with width not larger than $d$ while the width of $R_{\text{full}}$ is greater than $d$. Therefore, $R_{\text{long}}$ is a subnetwork of $R_{\text{full}}$ and satisfies the inequality~(\ref{ieq:bound}). 

\end{proof}
 }


{\appendix[Different types of SAMs]
In this part, we review the SAMs used in our paper, \textit{i.e.}, SE~\cite{hu2018squeeze}, SGE~\cite{li2019spatial} and DIA~\cite{huang2020dianet}. We follow some notations of Section~\ref{sec:prelim}. Let $x_\ell$ be the input of the $\ell^\text{th}$ block, $f_\ell(\cdot)$ be the residual mapping, and $M(\cdot; W_\ell)$ be the SAM in the $\ell^\text{th}$ block with the parameters $W_\ell$. The attention is formulated as $M(f_\ell(x_\ell);W_\ell)$. We denote $f_\ell(x_\ell)$ as $X^{(\ell)}$ of size $C\times H\times W$, where $C,H$ and $W$ denote channel, height and width, respectively. For simplicity, we denote $X_{chw}^\ell=X^\ell[c,h,w]$ as the value of pixel $(h,w)$ at the channel $c$ and $X_{c}^\ell = X^\ell[c,:,:]$ as the tensor at the channel $c$.

\textbf{SE Module.} SE module utilizes average pooling to extract the features and processes the extracted features by a one-hidden-layer fully connected network.  

First, the SE module squeezes the information of channels by the average pooling,
\begin{align}
  m_{c}^\ell = \text{AVG}(X_{c}^\ell) = \frac{1}{H\cdot W}\sum_{h=1}^H\sum_{w=1}^W X_{chw}^\ell,
  \label{eqn:avgpool}
\end{align}
where $c=1,\cdots,C$. Then, an one-hidden-layer fully connected network $\text{FC}(\cdot;W_\ell)$ with ReLU activation is used to fuse the information of all the channels and here $W_\ell$ is the parameter. The hidden layer node size is $C//r$, where ``$//$'' is exact division and ``r'' denotes reduction rate. The reduction rate is 16 in our experiments. Finally, a sigmoid function (\textit{i.e.}, $\text{sig}(z) = 1/(1+e^{-z})$) is applied to the processed features and we get the attention as follows,
\begin{align}
    [\delta_{1}; \cdots; \delta_{C}] = \text{sig}(\text{FC}([m_{1}^\ell; \cdots; m_{C}^\ell];W_\ell)).
    \label{eqn:se_attention}
\end{align}

\textbf{DIA Module.} DIA module integrates the block-wise information by an LSTM~(Long Short-Term Memory). Let $m_{c}^{\ell}$ be the output of average pooling as Eq.\ref{eqn:avgpool}.
Then $m_{c}^{\ell}$ is passed to LSTM along with a hidden state vector $h_{\ell-1}$ and a cell state vector $c_{\ell-1}$, where $h_0$ and $c_0$ are initialized as zero vectors. The LSTM generates $h_{\ell}$ and $c_{\ell}$ at the $\ell^\text{th}$ block, \textit{i.e.,}
    \begin{align}
    (h_\ell,c_\ell) = \text{LSTM}([m_{1}^\ell; \cdots; m_{C}^\ell], h_{\ell-1},c_{\ell-1};W),
    \label{eqn:lstm-dia}
    \end{align}
     where $W$ is the trainable parameter of the LSTM. The hidden state vector $h_t$ is used as attention to recalibrate feature maps. The reduction ratio within LSTM introduced in ~\cite{huang2020dianet} is 4 for CIFAR100 or 20 for ImageNet2012. 

\textbf{SGE Module.} SGE divides the feature maps into different groups and then utilizes the global information from the group to recalibrate its features. Let G be the number of groups and then each group has $C//G$ feature maps. Denote $Y^\ell$ of size $(C//G)\times H\times W$ as a group of feature maps within $X^\ell$. The extracted feature for the group $Y^\ell$ is 
\begin{align}
    g_{c}^\ell = \text{AVG}(Y_{c}^\ell) = \frac{1}{H\cdot W}\sum_{h=1}^H\sum_{w=1}^W Y_{chw}^\ell.
\end{align}
Let $g$ be $[g_{1}^\ell;\cdots;g_{C//G}^\ell]$. The importance coefficient for each pixel $(h,w)$ is defined as 
\begin{align}
    p_{hw} = g\cdot Y[:,h,w],
\end{align}
where $\cdot$ is dot product. Then $p_{hw}$ is normalized by
\begin{align}
    \hat{p}_{hw} = \frac{p_{hw}-\mu}{\sigma+\epsilon},
\end{align}
where the mean $\mu$ and variance $\sigma^2$ are defined by 
\begin{align}
    \mu = \frac{1}{H W}\sum_{h=1}^H\sum_{w=1}^Wp_{hw}, \quad \sigma^2 =  \frac{1}{H W}\sum_{h=1}^H\sum_{w=1}^W(p_{hw}-\mu)^2.
\end{align}
An additional pair of parameters $(\gamma, \beta)$ are introduced for the group $Y^\ell$ to rescale and shift the normalized features, and SGE modules get the attention for $Y[:,h,w]$ as follows,
\begin{align}
    \text{sig}(\gamma\hat{p}_{hw}+\beta).
\end{align}
$G$ is 4 for CIFAR100 and 64 for ImageNet2012 experiments.

}

{\appendix[Training details for supernet]
In Alg.\ref{alg:ean}, we have presented the training strategy for supernet briefly. We show this process in an intuitive way in Fig.~\ref{fig:algo}. First, for each step $t$, we can sample one connection scheme from a $[Bernoulli(\beta)]^m$ distribution. Next, based on this sampled connection scheme, we can obtain a subnetwork from supernet. Then, we train this subnetwork on the training set $D_\text{train}$.

 \begin{figure}[H]
    \centering
    \includegraphics[width=1\linewidth]{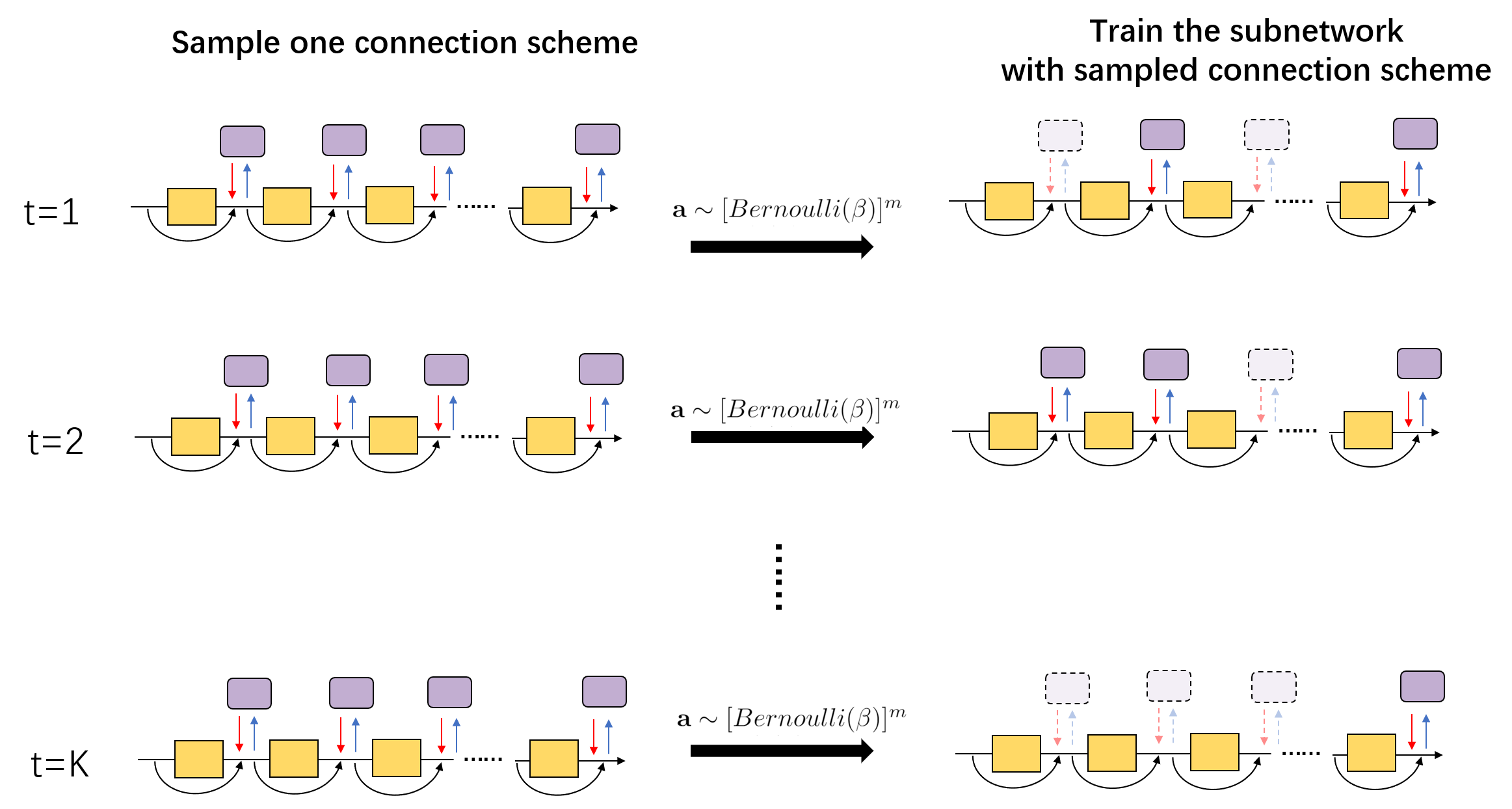}
    \caption{Procedure of training a supernet.}
    \label{fig:algo}
\end{figure}

}

{\appendix[Training details for Controller]
In this part, we provide the training details for the controller. The training process is shown in Fig.~\ref{fig:arch}. The reward function of the connection scheme consists of three parts, \textit{i.e.}, sparsity reward, validation reward, and curiosity bonus. Besides, we supplement some details of Alg.~\ref{alg:ean}. 

\textbf{Sparsity Reward} $g_\text{spa}$. One of our goals is to accelerate the inference of the Full-SA network. To achieve it, we complement a sparsity reward $g_\text{spa}$ to encourage the controller to generate the schemes with fewer connections between SAMs and backbone. We define $g_\text{spa}$ by 
\begin{align}
    g_\text{spa} =  1 - \frac{\left\Vert \mathbf{a}\right\Vert_0}{m},
    \label{eqn:sparse}
\end{align}where $\left\Vert\cdot\right\Vert_0$ is a zero norm that counts the number of non-zero entities, and $m$ is the number of blocks.

\textbf{Validation Reward} $g_\text{val}$. Another goal is to find the schemes with which the networks can maintain the original accuracy. Hence, we use the validation accuracy of the subnetwork $\Omega(\mathbf{x}|\mathbf{a})$ sampled from the supernet as a reward, which depicts the performance of its structure. The accuracy of $\Omega(\mathbf{x}|\mathbf{a})$ on $D_\text{val}$ is denoted as $g_\text{val}$. In fact, it is popular to use validation accuracy of a candidate network as a reward signal in NAS~\cite{pham2018efficient,zoph2016neural,guo2020single,zoph2018learning,you2020greedynas}. Furthermore, it has been empirically proven that the validation performance of the subnetworks sampled from a supernet can be positively correlated to their stand-alone performance~\cite{bender2018understanding}. We evaluate the correlation between the validation accuracy of subnetworks sampled from a supernet and their stand-alone performance on CIFAR100 with ResNet and SE module over 42 samples and obtain the Pearson coefficient is 0.71, which again confirms the strong correlation as shown in the previous works.


\textbf{Curiosity Bonus} $g_\text{rnd}$. To encourage the controller to explore more potentially useful connection schemes, we add the Random Network Distillation~(RND) curiosity bonus~\cite{burda2018exploration} in our reward. 
Two extra networks with input $\mathbf{a}$ are involved in the RND process, including a target network $\sigma_1(\cdot)$ and a predictor network $\sigma_2(\cdot;\phi)$, where $\phi$ is the parameter set. The parameters of $\sigma_1(\cdot)$ are randomly initialized and fixed after initialization, while $\sigma_2(\cdot;\phi)$ is trained with the connection schemes collected by the controller. 

The basic idea of RND is to minimize the difference between the outputs of these two networks, which is denoted by term $\sigma_\phi(\cdot) = \left\Vert\sigma_1(\cdot)-\sigma_2(\cdot;\phi)\right\Vert_2^2$, over the seen connection schemes.
If the controller generates a new scheme $\mathbf{a}$, $\sigma_\phi(\mathbf{a})$ is expected to be larger because the predictor $\sigma_2(\cdot;\phi)$ never trains on scheme $\mathbf{a}$. Then, we denote the term $\left\Vert\sigma_1(\mathbf{a})-\sigma_2(\mathbf{a};\phi)\right\Vert_2^2$ as $g_\text{rnd}$, which is used as curiosity bonus to reward the controller for exploring a new scheme. Besides, in Fig.~\ref{fig:enas_vs_ean}, we empirically show that RND bonus mitigates the fast convergence of early training iterations, leading to exploration for more schemes.

To sum up, our reward $G(\mathbf{a})$ becomes
\begin{align}
    G(\mathbf{a}) = \lambda_1\cdot g_\text{spa} + \lambda_2 \cdot
    g_\text{val}+ \lambda_3 \cdot g_\text{rnd}, 
    \label{eqn:rnd_reward}
\end{align}
where $\lambda_1, \lambda_2, \lambda_3$ are the coefficients for each reward. 

\textbf{Data Reuse.} To improve the utilization efficiency of sampled connection schemes and speed up the training of the controller, we incorporate Proximal Policy Optimization~(PPO)~\cite{schulman2017proximal} in our method. As shown in Alg.~\ref{alg:ean}, after the update of parameter $\theta$ and $\phi$, we put the tuple $(\mathbf{p}_\theta,~\mathbf{a},~G(\mathbf{a}))$ into a buffer. At the later step, we retrieve some used connection schemes and update $\theta$ as follows: 
\begin{align}
\begin{split}
    \kappa&=\mathbb{E}_{\mathbf{a}\sim\mathbf{p}_{\theta_{old}}}\left[G(\mathbf{a})\sum_{i=1}^m\frac{\hat{p}^i_\theta}{\hat{p}^i_{\theta_{old}}}\nabla_\theta\log \hat{p}_\theta^i\right],\\
    \theta&\gets\theta+\eta\cdot\kappa,
\end{split}
    \label{eqn:ppo}
\end{align}
 where $\eta$ is the earning rate and the $\theta_{old}$ denotes the $\theta$ sampled from buffer. Note that we do not update with PPO until the controller is updated for $h$ times. 
\begin{figure}[t]
\begin{center}
\includegraphics[width=0.4\textwidth]{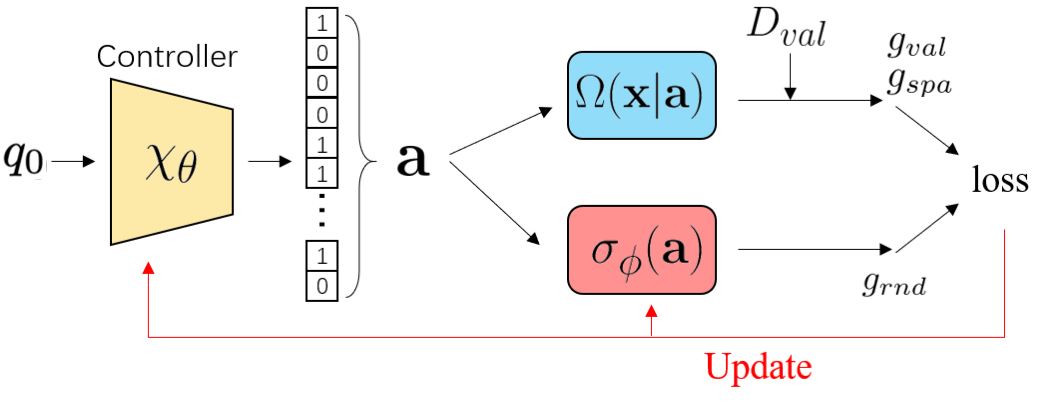}
\end{center}
\caption{The illustration of our policy-gradient-based method to search an optimal scheme.}
\label{fig:arch}
\end{figure}

Finally, we give hyper-parameter settings for training a controller on different datasets. 

\textbf{CIFAR100.} We optimize the controller for 1000 iterations with momentum SGD. The learning rate is set to be 5$\times 10^{-2}$. The time step $h$ to apply PPO is 10. 

\textbf{ImageNet2012.} We optimize the controller for 300 iterations with momentum SGD. The learning rate is set to be 5$\times 10^{-2}$. The time step $h$ to apply PPO is 10.

}

{\appendix[Training details for Stand-alone Performance]
In this part, we introduce the parameter setting for the model trained from scratch. In our experiments, we use cross-entropy loss and optimize the model by SGD with momentum 0.9 and initial learning rate 0.1. The weight decay is set to be $10^{-4}.$ The results for all search methods reported are the best out of three candidates with the highest reward (lowest validation loss for DARTS) in one search. 

\textbf{CIFAR100.} When ResNet164 is used, the model is trained for 164 epochs with the learning rate dropped by 0.1 at 81, 122 epochs. When ResNet38 is used, the model is trained for 100 epochs with the learning rate following cosine learning rate decay. In order to mitigate the over-fitting problems faced by the deep networks, ResNet164 is trained with random flipping and cropping. ResNet38 is trained with random flipping.

\textbf{ImageNet2012.} We use the ResNet50 backbone for ImageNet experiments. The network is trained for 120 epochs with the learning rate dropped by 0.1 at every 30 epochs.

}

 

\bibliographystyle{IEEEtran}
\bibliography{ref.bib}

\vspace{11pt}

\vfill

\end{document}